\documentclass{article}

\usepackage{env}

\usepackage[utf8]{inputenc} 
\usepackage[T1]{fontenc}    
\usepackage{hyperref}
\hypersetup{
    colorlinks=True,
    linkcolor=olive,
    filecolor=green,      
    urlcolor=blue,
    citecolor=teal
}

\usepackage{url}
\usepackage{booktabs}
\usepackage{amsfonts}
\usepackage{nicefrac}
\usepackage{microtype}
\usepackage{xcolor}
\usepackage{amsmath}
\usepackage{amssymb}
\usepackage{amsthm}
\usepackage{booktabs}
\usepackage{caption}
\usepackage{graphicx}
\usepackage{subcaption}
\usepackage{algorithm}
\usepackage{algpseudocode}
\usepackage{appendix}
\usepackage{enumitem}
\usepackage{wrapfig}
\usepackage{bm}
\usepackage{pifont}
\usepackage{multirow}
\usepackage{cancel}
\usepackage{listings}

\usepackage{titletoc}
\usepackage[draft]{changes} 
\definechangesauthor[name={Flo}, color=red]{fh}
\definechangesauthor[name={Kai}, color=orange]{kl}

\newtheorem{proposition}{Proposition} 
\newtheorem{theorem}{Theorem} 
\newtheorem{lemma}{Lemma}

\newtheorem{assumption}{Assumption}

\newcommand{\vbf}{\mathbf{v}}
\newcommand{\bfa}{\mathbf{a}}
\newcommand{\abf}{\mathbf{a}}
\newcommand{\dd}{\mathbf{d}}

\newcommand{\g}{\mathbf{g}}

\newcommand{\m}{\mathbf{m}}
\newcommand{\p}{\mathbf{p}}

\newcommand{\eps}{\bm{\epsilon}}

\newcommand{\ww}{\mathbf{w}}

\newcommand{\rbf}{\mathbf{r}}

\newcommand{\M}{\mathbf{M}}

\newcommand{\X}{\mathbf{X}}

\newcommand{\A}{\mathbf{A}}

\newcommand{\C}{\mathbf{C}}

\newcommand{\B}{\mathbf{B}}

\newcommand{\W}{\mathbf{W}}
\newcommand{\D}{\mathbf{D}}
\newcommand{\G}{\mathbf{G}}
\newcommand{\PP}{\mathbf{P}}

\newcommand{\E}{\mathbb{E}}
\newcommand{\Rbf}{\mathbf{R}}

\newcommand{\Obf}{\mathbf{O}}

\newcommand{\mubf}{\boldsymbol{\mu}}

\newcommand{\sgn}{\text{sgn}}

\newcommand{\diag}{\mathrm{diag}}

\newcommand{\inner}[2]{\left \langle #1, #2 \right \rangle}
\newcommand{\norm}[1]{\left \| #1 \right\|}
\newcommand{\normF}[1]{\norm{#1}_{\mathrm{F}}}

\newcommand{\opnorm}[1]{\norm{#1}_{S_\infty}}

\newcommand{\RowNorm}[1]{\norm{#1}_{\mathrm{row}}}
\DeclareMathOperator{\Diag}{Diag}
\newcommand{\pare}[1]{\left( #1 \right)}
\newcommand{\innerF}[2]{\left\langle #1, #2 \right\rangle_{\mathrm{F}}}
\newcommand{\rowPos}{\mathbb{R}^{m \times n}_{\text{row} > 0}}

\newcommand{\Ft}{\widetilde {\mathcal{L}}}

\newcommand{\gt}{\g_t}
\newcommand{\gtp}{\g_{t+1}}
\newcommand{\mt}{\m_t}
\newcommand{\mtm}{\m_{t-1}}
\newcommand{\Rt}{\Rbf_t}
\newcommand{\Rtp}{\Rbf_{t+1}}
\newcommand{\Wt}{\W_t}
\newcommand{\Wtp}{\W_{t+1}}
\newcommand{\Mt}{\M_t}
\newcommand{\Mtm}{\M_{t-1}}
\newcommand{\Ot}{\Obf_t}
\newcommand{\gradgt}{\nabla_{\g} \Ft(\gt, \Rt)}
\newcommand{\gradRt}{\nabla_{\Rbf} \Ft(\gt, \Rt)}
\newcommand{\trnorm}[1]{\norm{#1}_{S_1}}

\newcommand{\muon}{Muon}

\newcommand{\signsgd}{SignSGD}
\newcommand{\sigg}{\sigma_\g}
\newcommand{\sigR}{\sigma_\Rbf}

\newcommand{\algname}{Muown}

\title{Muown: Row-Norm Control for Muon Optimization}

\author{%
  Kai Lion\\
  Department of Computer Science\\
  ETH Zurich, Switzerland\\
  \texttt{kai.lion@inf.ethz.ch}
  \And
  Florian Hübler\\
  Department of Computer Science\\
  ETH Zurich, Switzerland\\
  \texttt{florian.huebler@inf.ethz.ch} 
  \And
  \\
  Bingcong Li\\
  Department of Computer Science\\
  ETH Zurich, Switzerland\\
  \texttt{bingcong.li@inf.ethz.ch} 
  \And
  \\
  Antonio Orvieto\\
  ELLIS Institute Tübingen, MPI-IS\\
  Tübingen AI Center, Germany\\
  \texttt{antonio@tue.ellis.eu}
  \And
  \\
  Niao He\\
  Department of Computer Science\\
  ETH Zurich, Switzerland\\
  \texttt{niao.he@inf.ethz.ch} \\
}

\newcommand{\affmark}[1]{\textsuperscript{\normalfont #1}}

\author{%
\begin{minipage}{0.94\textwidth}
\centering
{\large
Kai Lion\affmark{1}\quad
Florian H\"ubler\affmark{1,2}\quad
Bingcong Li\affmark{1}\\[-0.1ex]
Antonio Orvieto\affmark{3}\quad
Niao He\affmark{1}
}\\[0.7em]
{\small
\affmark{1} Department of Computer Science, ETH Zurich, Switzerland\\[-0.05ex]
\affmark{2} Department of Mathematics, School of Computation, Information and Technology;\\[-0.05ex]
Technical University of Munich, Germany\\[-0.05ex]
\affmark{3} ELLIS Institute Tübingen, MPI-IS, Tübingen AI Center, Germany\\[0.6em]
\texttt{\{kai.lion,florian.huebler,bingcong.li,niao.he\}@inf.ethz.ch}\\[-0.05ex]
\texttt{antonio@tue.ellis.eu}
}
\end{minipage}%
}

\begin{document}

\maketitle

\begin{abstract}
Muon has emerged as a strong competitor to AdamW for language model pre-training, yet its behavior at scale is sensitive to weight decay. Recent work has observed that, for Muon without decoupled weight decay, the spectral norm of weight matrices drifts upward over training. Through a decomposition of the spectral norm into a row-magnitude factor and a row-coherence factor, we identify the former as the empirical driver of this drift under Muon, while the latter remains well-behaved along the trajectory. Motivated by this diagnosis, we introduce \algname{}, a drop-in replacement for Muon that treats the row-magnitude vector as an explicit optimizer variable, updating it under the $\ell_\infty$ geometry induced by the decomposition, while applying Muon unchanged to the remaining direction component. We prove that \algname{} attains the optimal non-convex rates in both deterministic and stochastic regimes under a dual norm aligned with the underlying geometries and with a stochastic noise coefficient that empirically remains below that of Muon throughout training. Across GPT-style pre-training on FineWeb-Edu with model sizes from 124M up to 2.7B parameters, \algname{} improves perplexity over Muon, SOAP, AdamW, and Lion. It also widens the plateau of near-optimal learning rates across model scales, reduces sensitivity to weight decay, and avoids the spectral norm drift at negligible step-time overhead when appropriately sharded. 
\end{abstract}

\section{Introduction}

Moving beyond the diagonal conditioning inherent to Adam and its variants \citep{kingma_adam_2015, loshchilov_decoupled_2019}, an emerging class of optimizers is engineered to respect the matrix-shaped structure of the linear layers of the Transformer architecture. Notable examples include Shampoo \citep{gupta_shampoo_2018}, SOAP \citep{vyas_soap_2024}, Scion~\citep{pethick_training_2025} and Muon \citep{jordan2024muon}. In particular, Muon has been motivated through the perspective of steepest spectral descent \citep{carlson_stochastic_2015}, which computes the update direction as the solution to the optimization problem $\arg \min_{\Delta \W \in \mathbb{R}^{m \times n}} \inner{\G}{\Delta \W} \text{ s.t. } \opnorm{\Delta \W} \leq 1$ where $\G$ is the matrix-shaped gradient of the loss with respect to $\W \in \mathbb{R}^{m \times n}$ and $\opnorm{\cdot}$ is the spectral (Schatten-$\infty$) norm. 

Despite the encouraging empirical success of Muon, it has been observed that, at scale, its formulation can suffer from numerical overflows in bfloat16 \citep{liu_muon_2025} and exploding attention logits \citep{team_kimi_2025}. At the same time, the spectral norm of weight matrices has been documented to grow consistently throughout training under Muon without weight decay \citep[Fig.\ 9]{pethick_training_2025}, violating the \emph{spectral condition} for stable feature learning \citep{yang_spectral_2024}, which prescribes $\opnorm{\W} = \Theta(\sqrt{m/n})$ and $\opnorm{\Delta\W} = \Theta(\sqrt{m/n})$. This drift away from the stable feature-learning regime could potentially be linked to the pathologies mentioned above. 

A standard remedy is decoupled weight decay \citep{chen_muon_2025, chen_lion_2025, pethick_training_2025}, which keeps $\opnorm{\W_t}$ in check by shrinking the weight matrix uniformly at every step. This intervention is non-targeted, scaling the entire spectrum uniformly, irrespective of which directions are responsible for spectral-norm growth, and has been observed to slow convergence in the early phase of training \cite[Fig.~2]{liu_muon_2025}, \cite[Fig.~8]{pethick_training_2025}. We introduce a complementary perspective: rather than constraining the spectrum as a whole, we identify systematic row-norm growth as the empirical mechanism driving the spectral-norm drift (Fig.~\ref{fig:row-scales-mlp},~\ref{fig:row-scales-wout}) and control it by reparameterizing the weight so that the row scales become explicitly trainable. To remain agnostic to the model architecture, this reparameterization is realized implicitly inside the optimizer, leaving the forward pass unchanged. The resulting procedure achieves consistent gains in perplexity across all tested model scales and learning rates, at a minimal step-time overhead compared to Muon.

\subsection{Contribution}

\begin{itemize}
    \item We decompose the spectral norm into a row-magnitude factor and a row-coherence factor (Proposition~\ref{prop:spectral-norm-decomposition}), which singles out the row magnitude as the component whose evolution tracks the spectral norm under Muon. Building on this perspective, we propose \algname{}, a drop-in replacement for Muon that promotes the row magnitudes to an explicit optimizer variable, while Muon is applied unchanged to the directional component.
    \item \algname{} retains the optimal non-convex rates of Muon in both deterministic and stochastic settings (Theorems~\ref{thm:stoch_convergence}, \ref{thm:det_convergence}), but established in the mixed dual norm matching the two update geometries. The result supports the empirical improvements of \algname{} through an improved noise term.
    \item On FineWeb-Edu pre-training from 124M to 2.7B parameters, \algname{} consistently improves perplexity over Muon, SOAP, AdamW, and Lion, maintaining a consistent perplexity advantage over the best-tuned Muon baseline. Further, it broadens the plateau of near-optimal learning rates across model widths (Fig.\ \ref{fig:width-ablation}), and alleviates the need for weight-decay tuning (Fig.\ \ref{fig:weight-decay-ablation}). The method adds only minor step-time overhead once sharded (Table~\ref{tab:timing-overhead}), matching Muon's runtime in practical scenarios.
\end{itemize}

The remainder of this work is structured as follows. In Section~\ref{sec:related-work}, we review relevant background. We then introduce \algname{} in Section~\ref{sec:method} and motivate it from the perspective of approximate spectral norm control via trainable row-scales. In Section~\ref{sec:conv_analysis}, we establish convergence of \algname{} and conclude by comparing its effectiveness against Muon, SOAP, and AdamW in Section~\ref{sec:experiments}. 

\begin{figure}[t]
    \centering
    \begin{subfigure}[t]{0.245\textwidth}
        \centering
        \includegraphics[width=\textwidth]{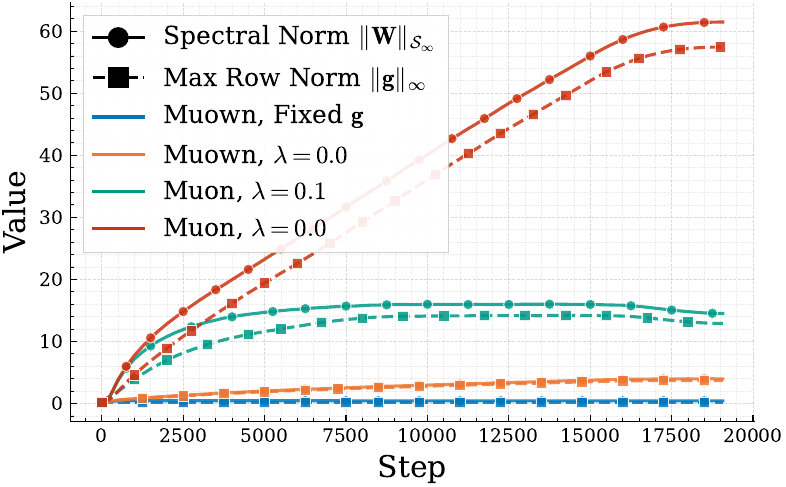}
        \caption{MLP Layer}
        \label{fig:row-scales-mlp}
    \end{subfigure}\hfill
    \begin{subfigure}[t]{0.245\textwidth}
        \centering
        \includegraphics[width=\textwidth]{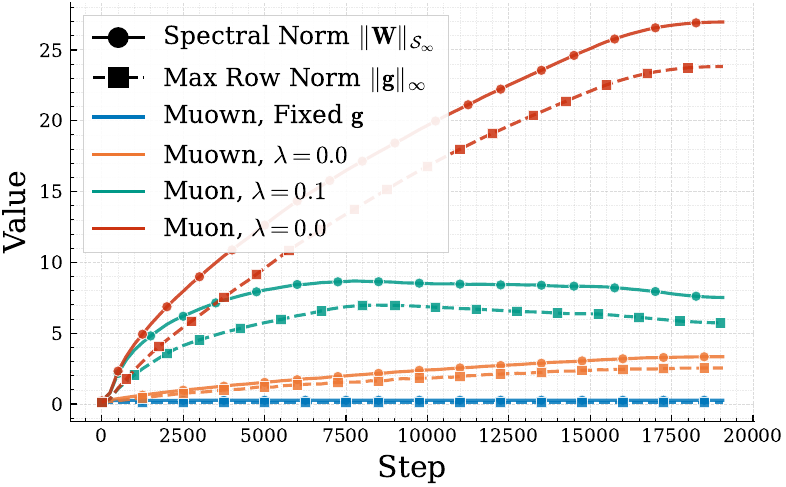}
        \caption{Output Projection}
        \label{fig:row-scales-wout}
    \end{subfigure}\hfill
    \begin{subfigure}[t]{0.245\textwidth}
        \centering
        \includegraphics[width=\textwidth]{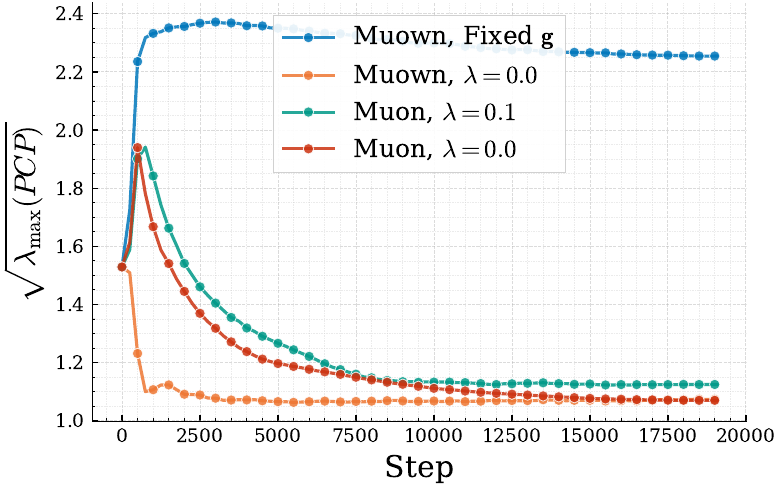}
        \caption{MLP Layer}
        \label{fig:lambda-max-fc2}
    \end{subfigure}
    \begin{subfigure}[t]{0.245\textwidth}
        \centering
        \includegraphics[width=\textwidth]{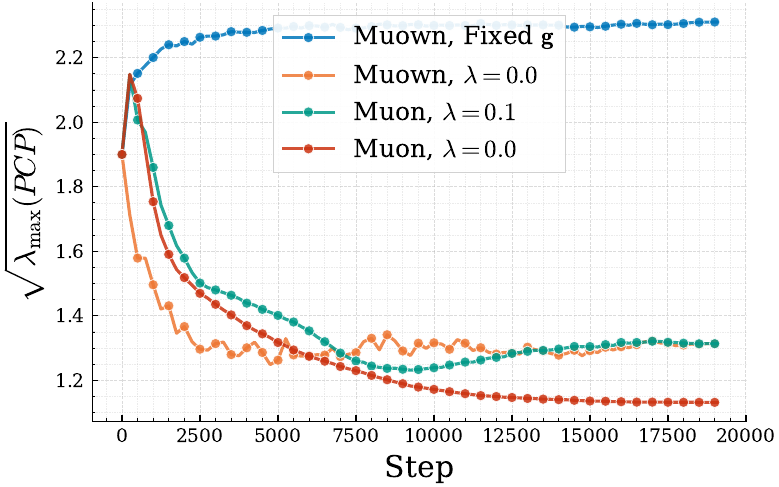}
        \caption{Output Projection}
        \label{fig:lambda-max-w_out}
    \end{subfigure}\hfill

    \caption{\textit{Left two:} Drift of the maximum row norm $\|\g_t\|_{\infty}$ drives the growth of the spectral norm $\opnorm{\W_t}$ for an MLP and attention output projection layer in a 500M transformer. When intervening by changing the parameterization s.t.\ the row magnitudes are fixed to the value at initialization, the systematic spectral norm increase vanishes. \textit{Right two:} The remaining coherence factor $\lambda_{\max}\!\bigl(\PP_t\,\C_t\,\PP_t\bigr)$ in the spectral norm exhibits no drift. $\lambda$ denotes the weight decay, for more details, see Appendix~\ref{appdx:spectral-experiments}}
    \label{fig:sv_histograms}
    \vspace{-10pt}
\end{figure}

\section{Related Work}
\label{sec:related-work}

\textbf{Shaping Gradient Spectra.} Pre-conditioning is a central tool in optimization for machine learning~\citep{bottou2018optimization}: while early approaches focus on efficient use of the Hessian or the Fisher Information~\citep{martens2015optimizing},  contemporary methods directly operate on gradients, taking their matrix structure into account. An early notable work in gradient-based structure-aware optimizers is Shampoo~\citep{gupta_shampoo_2018}. Rather than applying pre-conditioning on flattened, structureless parameter vectors, Shampoo employs two matrix-shaped pre-conditioners. An idealized variant of Shampoo can be cast as performing steepest spectral descent under the spectral norm, effectively setting the singular values of the parameter update matrix to unity  \citep{bernstein_old_2024}. SOAP \citep{vyas_soap_2024} optimizes singular values adaptively with Adam. Most prominently, Muon \citep{jordan2024muon, tuddenham_orthogonalising_2022} shapes the spectrum through orthogonalization of the momentum state, approximating the normalized steepest descent direction \citep{boyd_convex_2004} under a spectral norm constraint \citep{carlson_stochastic_2015}. 
\begin{wrapfigure}{r}{0.5\textwidth}
        \centering
        \vspace{-0pt}
         \includegraphics[width=0.5\textwidth]{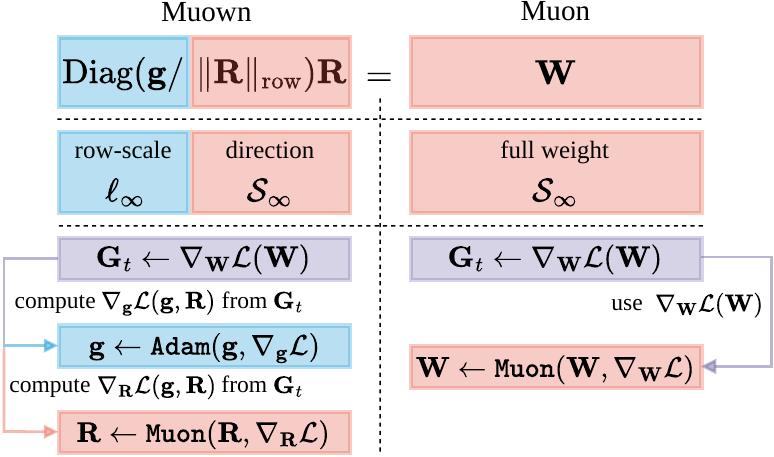}
         \caption{Abstract Illustration of \algname{} and Muon, showing the parameterization (top), semantics and geometry (middle), and the optimizer step (bottom).}
         \vspace{-15pt}
         \label{fig:muown-diagram}
\end{wrapfigure}

\textbf{Operator Norm Perspective.} This viewpoint is formalized by the operator-norm framework of metrized deep learning \citep{large_scalable_2024, bernstein_modular_2024}, where each layer is equipped with a geometry derived from its input-output semantics, and gradients, treated as dual vectors, are transported back to the primal weight space by a layerwise duality map \citep{nemirovski_problem_1983, beck_mirror_2003}. For MLP layers under a rescaled spectral norm, this map recovers Muon's orthogonalization \citep{bernstein_old_2024, jordan2024muon}. Both sides of the spectral condition then live in this layerwise operator norm: the size prescriptions on $\W$ and $\Delta\W$, as well as the steepest-descent rule that produces compatible updates. Steepest descent, however, only bounds $\opnorm{\Delta\W}$ per step, leaving the control of $\opnorm{\W}$ along the trajectory as a separate problem.

\begin{figure}
    \centering
    \begin{subfigure}[t]{0.5\textwidth}
        \centering
        \includegraphics[width=\textwidth]{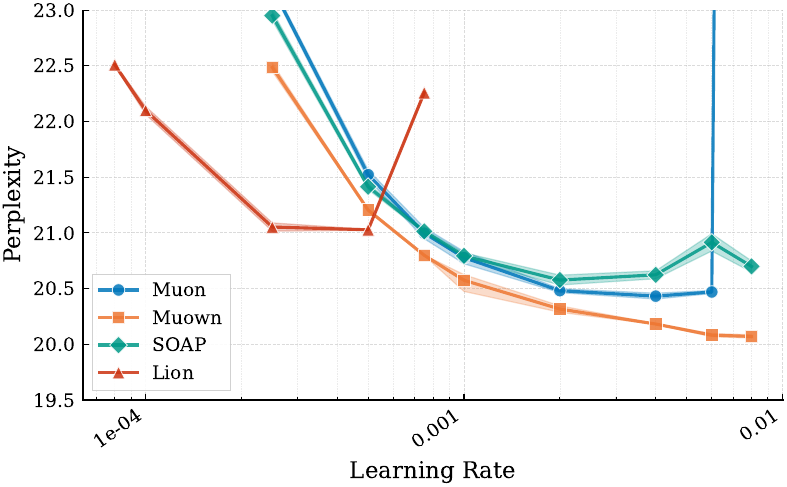}
        \caption{With weight decay}
        \label{fig:lr-ablation-160M-with-weight-decay}
    \end{subfigure}\hfill
    \begin{subfigure}[t]{0.5\textwidth}
        \centering
        \includegraphics[width=\textwidth]{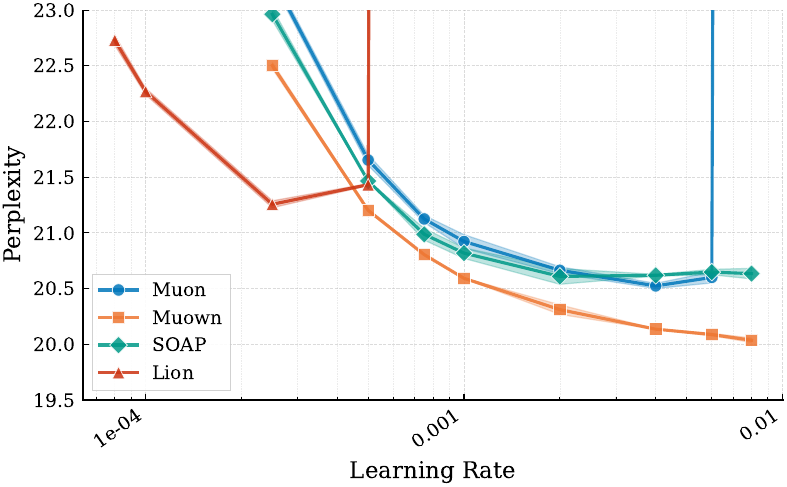}
        \caption{Without weight decay}
        \label{fig:lr-ablation-160M-without-weight-decay}
    \end{subfigure}

    \caption{Perplexity as a function of learning rate for Muon, \algname{}, Lion, and SOAP on a 124M model. \algname{} achieves lower perplexity across all learning rates and is more robust to the choice of learning rate, maintaining strong performance even at higher values where Muon diverges. We repeat the average of two seeds, reporting min and max as shaded area. Details are reported in Appendix~\ref{appdx:experimental-details}. }
    \label{fig:lr-ablation}
    \vspace{-15pt}
\end{figure}

\textbf{Spectral Control.} Existing mechanisms for keeping $\opnorm{\W}$ bounded during training fall into three broad families. \emph{Decoupled weight decay} shrinks $\W$ uniformly at every step. \cite{chen_muon_2025} cast Muon with decoupled weight decay as a member of the Lion-$\mathcal{K}$ family \citep{chen_lion_2025} whose trajectories converge to KKT points of a spectral-norm-constrained problem, with related Frank–Wolfe interpretations developed in \citep{sfyraki_lions_2025, pethick_training_2025}. \emph{Spectral reparameterizations} hardwire the singular spectrum of $\W$, e.g.\ as orthogonal under Parseval/Stiefel parameterizations \citep{cisse_parseval_2017}. \emph{Spectral-norm caps} act on the top singular value via soft penalties \citep{yoshida_spectral_2017}, post-hoc normalization \citep{miyato_spectral_2018}, or odd-polynomial clipping \citep{newhouse_training_2025}. All three families act on the singular spectrum of $\W$ as an aggregate quantity, leaving open which component of $\W$ \emph{empirically} drives spectral-norm growth under Muon.

Our work answers this open question at the parameterization level. We identify the maximal row magnitude $\|\g_t\|_\infty$ as the empirical driver of $\opnorm{\W_t}$ drift under Muon (Section~\ref{sec:motivation}) and reparameterize $\W$ so that this driver is promoted to an explicit optimizer variable, updated under its natural $\ell_\infty$ geometry. Spectral-norm control thus emerges from the construction of the parameterization rather than from an aggregate constraint on $\W$, avoiding both the indiscriminate shrinkage of decoupled weight decay and the orthogonality prior of spectral reparameterizations.

\section{Method}
\label{sec:method}

\paragraph{Notation.} Let $\W_t \in \mathbb{R}^{m \times n}$ denote the current iterate of a weight matrix of a linear layer with $m$ and $n$ output and input neurons, respectively, and $\G_t = \nabla_\W \mathcal{L}(\W_t) \in \mathbb{R}^{m \times n}$ its corresponding gradient at iteration $t$ with respect to a scalar loss $\mathcal{L}(\cdot)$. Division by vectors is elementwise. We write $\mathrm{diag}(\mathbf{A})\in\mathbb{R}^{k}$ for the vector of diagonal entries of a square matrix $\mathbf{A}\in\mathbb{R}^{k\times k}$, and $\mathrm{Diag}(\mathbf{a})\in\mathbb{R}^{k\times k}$ for the diagonal matrix with diagonal given by $\mathbf{a}\in\mathbb{R}^{k}$. For $p \in [1,\infty]$, $\norm{\bfa}_p$ denotes the $\ell_p$-norm of a vector $\bfa \in \mathbb{R}^k$, and $\norm{\A}_{S_p}$ the Schatten-$p$ norm of a matrix $\A\in\mathbb{R}^{m\times n}$, i.e.\ the $\ell_p$-norm of its singular values $\sigma_1(\A)\geq\dots\geq\sigma_{\min(m,n)}(\A)\geq 0$. In particular $\opnorm{\A}$ is the spectral norm and $\trnorm{\A}$ the nuclear norm. The canonical (Frobenius) inner product on $\mathbb{R}^{m\times n}$ is $\langle \A,\B\rangle := \mathrm{tr}(\A^\top \B)$ with induced norm $\normF{\A}$, and $\langle a, b \rangle = a^\top b$ on $\mathbb R^k$. We write $\lambda_{\max}(\mathbf M)$ for the largest eigenvalue of a symmetric matrix $\mathbf M$. With slight abuse of notation, we denote the row-norm vector $\RowNorm{\A} = (\norm{\A_{1,:}}_2, \dots, \norm{\A_{m, :}}_2)^\top \in \mathbb R^m$. For $\A \in \mathbb{R}^{m \times n}$, we define the projection which removes the per-row radial component with respect to $\X \in \mathbb{R}^{m \times n}$ as $\mathrm{Proj}_{\mathbf{X}}(\mathbf{A}) := \mathbf{A} - \mathrm{Diag}\!\big(\mathrm{diag}(\mathbf{A}\mathbf{X}^\top)\big)\,\mathbf{X}.$ In other words, for each row $\abf_i$, $\mathrm{Proj}_{\mathbf{X}}(\mathbf{A})$ subtracts $\langle \mathbf{a}_i,\mathbf{x}_i\rangle\mathbf{x}_i$.

\subsection{Motivation}
\label{sec:motivation}
\textbf{Row Norm Drift.} The constrained optimization and metrized-deep-learning arguments in Section~\ref{sec:related-work} single out $\opnorm{\W_t}$ as the quantity that governs stability of feature propagation,
yet steepest spectral descent only bounds $\opnorm{\Delta\W_t}$ per step. Before prescribing a separate control mechanism for $\opnorm{\W_t}$ itself, we ask a more basic question: \emph{which component of $\W_t$ is responsible for the observed spectral-norm drift under Muon?} The following decomposition isolates two candidate mechanisms.

\begin{proposition}[Spectral Norm Decomposition into Row Magnitude and Coherence]
\label{prop:spectral-norm-decomposition}
Let $\W_t \in \mathbb{R}^{m\times n}$ and suppose each row of $\W_t$ is nonzero. Define the row-magnitude vector $
\g_t := \RowNorm{\W_t}\in \mathbb{R}^m,
$
and the row-normalized matrix $
\D_t := \mathrm{Diag}\!\left(1/\g_t\right)\W_t \in \mathbb{R}^{m\times n},
$
so that $
\W_t=\mathrm{Diag}(\g_t)\D_t.
$
Let $
\C_t := \D_t\D_t^\top \in \mathbb{R}^{m\times m}
$
be the Gram matrix of the unit-norm rows, and define $
\p_t := |\g_t|/\norm{\g_t}_\infty \in [0,1]^m$ with $\PP_t := \mathrm{Diag}(\p_t).$
Then the spectral norm of $\W_t$ admits the decomposition 
\begin{equation*}
\opnorm{\W_t}^2
=
\norm{\g_t}_\infty^2\,\lambda_{\max}\!\bigl(\PP_t\,\C_t\,\PP_t\bigr).
\end{equation*}
In particular, the spectral norm splits into a row-scale term $\norm{\g_t}_\infty^2$ and a normalized alignment term $\lambda_{\max}(\PP_t\C_t\PP_t)$.
\end{proposition}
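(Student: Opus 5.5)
The identity is pure linear algebra, so the plan is to go through the Gram matrix $\W_t\W_t^\top$ and factor out the scalar $\norm{\g_t}_\infty$.

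First I would note that $\g_t$ is entrywise positive, because every row of $\W_t$ is assumed nonzero. Hence $|\g_t| = \g_t$, $\norm{\g_t}_\infty > 0$, and $\p_t$ is well defined with entries in $(0,1]$. The factorization $\W_t = \mathrm{Diag}(\g_t)\D_t$ is immediate from the definition of $\D_t$.

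Second, I would use the standard identity $\opnorm{\W_t}^2 = \lambda_{\max}(\W_t\W_t^\top)$. This follows from the singular value decomposition: the eigenvalues of $\W_t\W_t^\top$ are the squared singular values of $\W_t$, padded with zeros. Substituting the factorization gives
\begin{equation*}
\W_t\W_t^\top = \mathrm{Diag}(\g_t)\,\D_t\D_t^\top\,\mathrm{Diag}(\g_t) = \mathrm{Diag}(\g_t)\,\C_t\,\mathrm{Diag}(\g_t).
\end{equation*}

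Third, I would write $\mathrm{Diag}(\g_t) = \norm{\g_t}_\infty \PP_t$, which uses $\g_t = |\g_t|$. This yields
\begin{equation*}
\W_t\W_t^\top = \norm{\g_t}_\infty^2\,\PP_t\C_t\PP_t.
\end{equation*}
Since $\lambda_{\max}$ is positively homogeneous, $\lambda_{\max}(c\M) = c\,\lambda_{\max}(\M)$ for $c \ge 0$ and symmetric $\M$, and the claimed identity follows. The matrix $\PP_t\C_t\PP_t$ is symmetric positive semidefinite, so its largest eigenvalue is well defined and nonnegative.

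I do not expect a real obstacle. The only point needing care is the absolute value in the definition of $\p_t$, which is harmless because row norms are positive. I might also add a remark that $\lambda_{\max}(\PP_t\C_t\PP_t)$ lies in $[\max_i p_{t,i}^2,\ m] = [1,\ m]$. The lower bound comes from testing with the basis vector at the index attaining $\norm{\g_t}_\infty$, where the diagonal entry of $\C_t$ equals $1$. The upper bound holds because $\lambda_{\max}$ is at most the trace, which equals $\sum_i p_{t,i}^2 \le m$. This justifies calling the second factor a normalized coherence term, but it is not needed for the identity itself.
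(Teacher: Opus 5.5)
Your proposal is correct and follows essentially the same route as the paper. Both arguments write $\W_t\W_t^\top = \mathrm{Diag}(\g_t)\,\C_t\,\mathrm{Diag}(\g_t)$, use positivity of the row norms to set $\mathrm{Diag}(\g_t) = \norm{\g_t}_\infty \PP_t$, and pull the scalar out of $\lambda_{\max}$. Your side remark that $\lambda_{\max}(\PP_t\C_t\PP_t) \in [1, m]$, obtained by testing with the basis vector at the maximizing row and bounding by the trace, is also correct. It supplies the lower bound $\geq 1$ that the paper only asserts in the main text.
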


The proof is deferred to Appendix~\ref{ref:proof-spectral-norm-decomposition}. We remark that the two factors are \textit{disjoint} by design: any positive uniform row-rescaling of
$\W_t$ moves only $\|\g_t\|_\infty$, while any directional change to the rows
moves only $\lambda_{\max}(\PP_t\C_t\PP_t)$. The alignment factor $\lambda_{\max}(\PP_t\C_t\PP_t)\ge 1$ measures coherence of the unit-norm neurons in $\D_t$, weighted by the normalized row-scale profile $\p_t$. It collapses to $1$ if and only if the (non-negligible) rows are orthonormal, in which case $\opnorm{\W_t}=\|\g_t\|_\infty$ and the layer's operator norm is determined entirely by its maximal row scale.

\textbf{Row Norm Control.} Across the attention and MLP linear layers of a 500M model trained with Muon, we find that $\opnorm{\W_t}$ and $\|\g_t\|_\infty$ move in lock-step throughout training (Fig.~\ref{fig:row-scales-mlp},~\ref{fig:row-scales-wout}; Fig.~\ref{fig:spectral-norm-detailed}, \ref{fig:row-norm-detailed}), whereas $\lambda_{\max}(\PP_t\C_t\PP_t)$ only exhibits a brief transient rise early in training, remaining bounded thereafter (Fig.~\ref{fig:lambda-max-fc2},~\ref{fig:lambda-max-w_out}; Fig.~\ref{fig:lambda-max-detailed}). A simple causal intervention confirms that the row scales drive the spectral norm drift: freezing $\g_t =\g_1$ after initialization, a minimal intervention that touches only the row-scale factor, eliminates the systematic growth of $\opnorm{\W_t}$ (Figs.~\ref{fig:sv_histograms}, \ref{fig:spectral-norm-detailed}).

With $\|\g_t\|_\infty$ identified as the empirical driver of $\opnorm{\W_t}$ under Muon, we respond by promoting it from a byproduct of the $\W$-update to a trainable variable, optimized under its own geometry. Existing spectral-control schemes instead intervene at coarser granularities of Proposition~\ref{prop:spectral-norm-decomposition}. Trainable row magnitudes, by contrast, act only on the identified driver $\|\g_t\|_\infty$, without the orthogonality prior of reparameterizations and without the indiscriminate shrinkage of weight decay.

\begin{figure}[t]
        \centering
    \begin{subfigure}[t]{0.49\textwidth}
        \includegraphics[width=\textwidth]{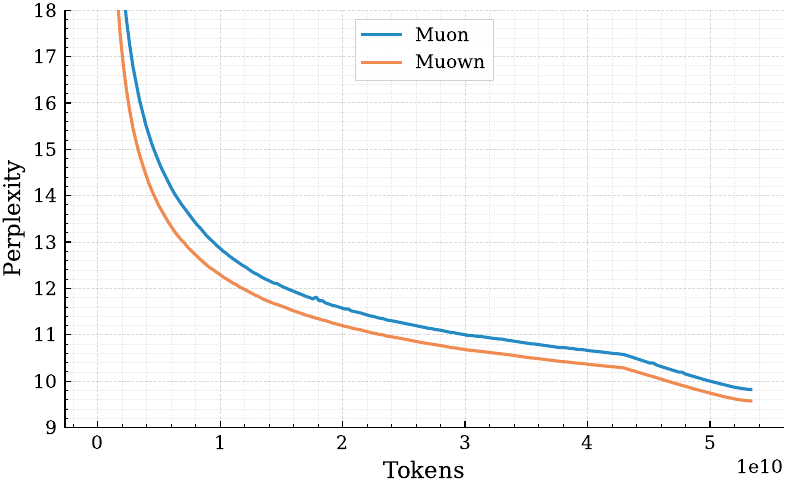}
        \caption{2.7B Model}
        \label{fig:3B-plot}
    \end{subfigure}\hfill
    \begin{subfigure}[t]{0.49\textwidth}
        \includegraphics[width=\textwidth]{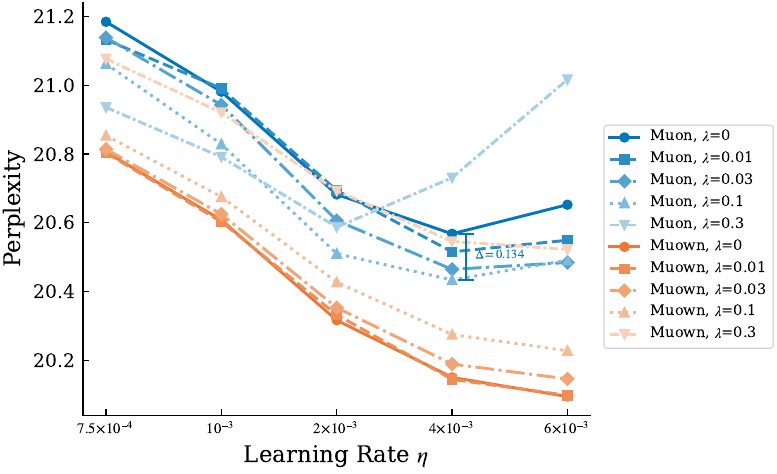}
        \caption{Weight Decay Ablation for 124M}
        \label{fig:weight-decay-ablation}
    \end{subfigure}
    \caption{\emph{Left:} Perplexity of Muon and \algname{} on a 2.7B transformer trained on FineWeb-Edu. \emph{Right:} Ablation of weight decay and learning rate for Muon and \algname{} on 124M models after 5B tokens.}
    \vspace{-15pt}
\end{figure}

\subsection{\algname{}}
\label{sec:algname}

To turn $\|\g_t\|_\infty$ into a controllable variable, we reparameterize each linear layer as $\W(\g,\Rbf) = \Diag(\g/\RowNorm{\Rbf})\Rbf$ with $\g\in\mathbb{R}^m$ and $\Rbf\in\mathbb{R}^{m\times n}$, i.e.\ weight normalization \citep{salimans_weight_2016}, under which $\g$ coincides exactly with the row-magnitude vector of Proposition~\ref{prop:spectral-norm-decomposition}. To remain agnostic to the model implementation, we realize the reparameterization \emph{implicitly inside the optimizer}: $\g$ is held as an additional optimizer state, $\Rbf$ is reconstructed from the stored effective weight $\W$ on each step, and the forward pass is left unchanged. The resulting procedure (Algorithm~\ref{alg:algname}, Fig.~\ref{fig:muown-diagram}) is therefore a drop-in replacement for Muon, which decomposes each layer into a direction $\Rbf$, already metrized by the spectral norm, and a diagonal per-neuron gain $\mathrm{Diag}(\g)$.

It remains to choose a geometry for $\g$. Composability in the modular-duality framework \citep[Def.~6]{bernstein_modular_2024} requires that the output space of $\Rbf$ and the input space of $\mathrm{Diag}(\g)$ carry the same norm, so the natural metric on $\g$ is the induced operator norm $\opnorm{\mathrm{Diag}(\Delta\g)}$. Since the operator norm of a diagonal matrix equals the largest entry in absolute value, this yields $\opnorm{\mathrm{Diag}(\Delta\g)}=\|\Delta\g\|_\infty$, i.e., the $\ell_\infty$ geometry on $\g$. This is independently confirmed by Proposition~\ref{prop:spectral-norm-decomposition}, which identifies $\|\g_t\|_\infty$ as the row-scale factor of $\opnorm{\W_t}$.

Steepest descent under $\ell_\infty$ is \signsgd\ \citep{bernstein_signsgd_2018}, with the entrywise sign as the duality map \citep[Ex.~2]{bernstein_modular_2024}. Its update $\Delta\g = -\eta\,\operatorname{sgn}(\nabla_\g)$ satisfies $\|\Delta\g\|_\infty = \eta$ by construction, so the row-scale contribution $\opnorm{\mathrm{Diag}(\Delta\g)} = \|\Delta\g\|_\infty$ to $\opnorm{\Delta\W_t}$ is bounded at every step. In practice, \signsgd\ is the stateless analogue of Adam with both EMAs disabled \citep{balles_geometry_2020, xie_implicit_2024}, and AdamW has been shown to converge to KKT points of an $\ell_\infty$-constrained problem \citep{xie_implicit_2024}. We therefore adopt Adam as the magnitude optimizer in Algorithm~\ref{alg:algname}, as it inherits the $\ell_\infty$ geometry while adding momentum-based smoothing against stochastic-gradient noise.

We initialize $\Rbf_1 \gets \W_1$ and cached row norms and magnitudes as $\rbf, \g_1 \gets \RowNorm{\W_1}$, so that $\W(\g_1,\Rbf_1)=\W_1$, making \algname{} start from exactly the same point as Muon under any non-zero initialization scheme. All remaining optimizer states are zero. Following \citet{liu_muon_2025}, we scale the $\Rbf$-step by $0.2\sqrt{\max(m,n)}$ to match the RMS norm of a typical Adam update, so that a single learning rate $\eta_t$ drives both the direction and magnitude updates and no separate learning rate for $\g$ is introduced. The resulting algorithm can be found in Algorithm~\ref{alg:algname}.

\section{Convergence Analysis}
\label{sec:conv_analysis}

We establish that \algname{} attains the optimal non-convex rate in the stochastic setting (Theorem~\ref{thm:stoch_convergence}) in a dual norm matching the two update geometries. 

Following the prevalent view of Adam as a smoothed sign method \citep{kunstner2023noise}, we analyze SignSGD with momentum (Signum) as its proxy on $\g$. SignSGD preserves the relevant $\ell_\infty$ geometry dynamics which largely explain Adam's behavior in Transformer training \citep{kunstner2023noise}, while making the underlying $\ell_\infty$ geometry explicit \citep{bernstein_signsgd_2018,balles_geometry_2020,xie_implicit_2024}.

The deterministic case follows and all proofs are deferred to Appendix~\ref{appdx:proofs}.

\begin{wrapfigure}{r}{0.51\textwidth}
  \vspace{-20pt}
  \begin{minipage}{0.51\textwidth}
    \begin{algorithm}[H]
    \caption{Step of Muon with integrated Weight Normalization (\algname{})}\label{alg:algname}
    {\linespread{1.3}\selectfont
    \begin{algorithmic}
    \Require $\W \in \mathbb{R}^{m \times n}$, gradient $\nabla_{\W} \mathcal{L}(\W) \in \mathbb{R}^{m \times n}$,
    states $\g, \rbf, \m_{\g}, \vbf_{\g} \in \mathbb{R}^{m}$, $\M \in \mathbb{R}^{m \times n}$, learning rate $\eta_t$, weight decay $\lambda$, momentum $\beta_1$
    \State $\Rbf \gets \mathrm{Diag}(\frac{\rbf}{\g}) \W$
    \State $\D \gets \mathrm{Diag}(1/\rbf)\Rbf$
    \State \textcolor{cyan}{\texttt{\# obtain decoupled gradients}}
    \State $\nabla_{\g} \mathcal{L}(\W) \gets (\nabla_{\W} \mathcal{L}(\W) \odot\D) \mathbf{1}_n$
    \State $\nabla_{\Rbf} \mathcal{L}(\W) \gets \mathrm{Diag}(\frac{\g}{\rbf})\mathrm{Proj}_{\D}(\nabla_{\W}\mathcal{L}(\W))$
    \State \textcolor{cyan}{\texttt{\# run muon w/ nesterov on direction}}
    \State $\M \gets \beta_1 \M + \nabla_{\Rbf} \mathcal{L}(\W)$
    \State $\Obf \gets \arg\min_{\opnorm{\Obf} \le 1}\langle \beta_1\M + \nabla_{\Rbf} \mathcal{L}(\W),\Obf\rangle$
    \State $\Rbf \gets \Rbf + 0.2 \sqrt{\max(m, n)}\eta_t \Obf$
    \State \textcolor{cyan}{\texttt{\# run adam on magnitude}}
    \State $\g  \gets \mathrm{Adam}(\nabla_{\g} \mathcal{L}(\W), \m_{\g}, \vbf_{\g}, \eta_t)$
    \State \textcolor{cyan}{\texttt{\# obtain effective weight}}
    \State $\rbf \gets \|\Rbf\|_{\mathrm{row}}$
    \State $\W \gets \mathrm{Diag}(\frac{\g}{\rbf}) \Rbf$
    \end{algorithmic}
    }
\end{algorithm}
  \end{minipage}
  \vspace{-30pt} 
\end{wrapfigure}

\textbf{Setup and Geometry.}\ Working in the re-parameterized variables $\g \in \mathbb{R}^m$, $\Rbf \in \rowPos$, where $\rowPos$ denotes the set of $m\times n$ matrices with strictly positive row norms, we consider $\Ft \colon \mathbb{R}^m \times \rowPos \to \mathbb{R}$, $\Ft(\g,\Rbf) := \mathcal L\pare{\W\pare{\g, \Rbf}} = \mathcal L\!\left(\Diag\!\left(\frac{\g}{\RowNorm{\Rbf}}\right)\, \Rbf\right)$. Since the $\g$- and $\Rbf$-updates take unit-sized steps in the $\ell_\infty$- and spectral norms respectively, we equip the joint variable with the product max-norm $\norm{(\g, \Rbf)} = \max\{\norm{\g}_\infty, \opnorm{\Rbf}\}$ and the canonical inner product $\langle (\g, \Rbf), (\g', \Rbf')\rangle := \langle \g, \g'\rangle + \langle \Rbf, \Rbf'\rangle$. The corresponding dual norm is $\norm{(\g, \Rbf)}_* = \norm{\g}_1 + \trnorm{\Rbf}$, combining the $\ell_1$-norm on $\mathbb{R}^m$ with the Schatten-$1$ (nuclear) norm on $\mathbb{R}^{m\times n}$. Each summand is the dual of the norm in which the corresponding update is bounded, so the product norm simply lifts the modular-duality viewpoint of Section~\ref{sec:related-work} to the joint variable $(\g,\Rbf)$.

\textbf{Stochastic Convergence.} We consider the objective $\mathcal L(\W) := \mathbb{E}_{\xi}[\mathcal L(\W;\xi)]$ with access only to stochastic gradients $\nabla \mathcal L(\W, \xi)$, and use the re-parametrized gradient oracles $\nabla_{\g} \Ft(\gt, \Rt, \xi_t) = \diag(\nabla \mathcal L(\W(\gt, \Rt), \xi_t) \D_t^\top)$ and $\nabla_{\Rbf} \Ft(\gt, \Rt, \xi_t) = \Diag\pare{\frac{\gt}{\RowNorm{\Rt}}}\mathrm{Proj}_{\D_t}(\nabla \mathcal L(\W(\gt, \Rt), \xi_t))$, where $\D_t = \Diag(1/\RowNorm{\Rbf_t}) \Rbf_t $ is the row-normalization of $\Rbf_t$. 
\begin{figure}[t]
    \centering
    \includegraphics[width=0.85\linewidth]{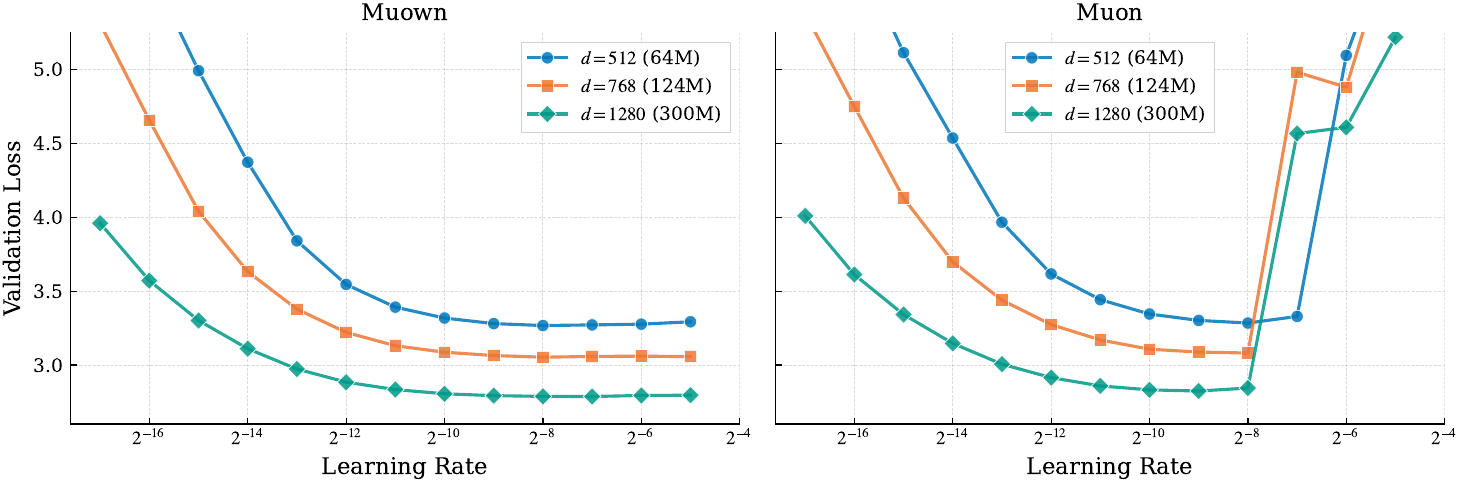}
    \caption{Validation loss of Muon and \algname{} at three different model widths of 512, 768, 1280 with 12 layers each. We observe that \algname{} yields close to optimal validation loss for a wide range of learning rates from $2^{-10}$ up to $2^{-5}$, irrespective of model width. }
    \label{fig:width-ablation}
    \vspace{-25pt}
\end{figure}

\begin{theorem}[Stochastic Convergence of \algname{}]\label{thm:stoch_convergence}
    Assume $\Ft$ is $L$-smooth along the trajectory, lower bounded by $\Ft_\star$ and the gradient oracles have $\sigg$ and $\sigR$ bounded variance accordingly (see Assumption \ref{assum:bounded_var}). Denote $\Delta_1 \geq \Ft(\g_1, \Rbf_1) - \Ft_\star$ and the norm equivalence constants $\zeta_\g := \max_\g \nicefrac{\norm{\g}_1}{\norm{\g}_2}, \zeta_\Rbf := \max_{\Rbf} \nicefrac{\trnorm{\Rbf}}{\normF{\Rbf}}$. Then the iterates of Algorithm \ref{alg:muown_signgd} with 
    $
        \eta_t \equiv \sqrt{\frac{\Delta_1 (1-\beta_1)}{L T}} \text{ and }
        \beta_1 = 1 - \min\left\{1, \max\left\{T^{-\nicefrac 2 3}, \sqrt{\frac{\Delta_1 L}{\hat\sigma^2 T}}\right\}\right\},
    $
    where $\hat \sigma := \zeta_\g \sigg + \zeta_\Rbf \sigR$, satisfy
    \begin{equation*}
    \frac{1}{T}\sum_{t=1}^T \E\left[\norm{\nabla \Ft\pare{\gt,\Rt}}_*\right]
    \leq 
    6 \sqrt{\frac{\Delta_1 L}{T}}
    + 8 \pare{\frac{\Delta_1 L \hat \sigma^2}{T}}^{\nicefrac 1 4}
    + \frac{4 \hat \sigma}{T^{\nicefrac 1 3}}
\end{equation*}
    which matches the optimal $\mathcal O\pare{\pare{\frac{\Delta_1 L \sigma^2}{T}}^{\nicefrac 1 4}}$ non-convex rate up to constants.
\end{theorem}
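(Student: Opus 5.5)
We follow the standard analysis of normalized steepest descent with momentum, applied to the joint variable $\mathbf{x}_t := (\gt, \Rt)$. We view Algorithm~\ref{alg:muown_signgd} as a single linear minimization oracle step in the product max-norm. Since $\norm{(\g,\Rbf)} = \max\{\norm{\g}_\infty, \opnorm{\Rbf}\}$, the LMO over the unit ball decouples: it is $\operatorname{sgn}$ on the $\g$-block and orthogonalization on the $\Rbf$-block. Hence, with joint momentum $\mathbf{m}_t = (\m_{\g,t}, \M_t)$, the update is $\mathbf{x}_{t+1} = \mathbf{x}_t + \eta\,\mathbf{o}_t$, where $\norm{\mathbf{o}_t}\le 1$ and $\inner{\mathbf{m}_t}{\mathbf{o}_t} = -\norm{\mathbf{m}_t}_*$ with $\norm{\cdot}_* = \norm{\cdot}_1 + \trnorm{\cdot}$. (The $0.2\sqrt{\max(m,n)}$ factor can be absorbed or dropped in the analyzed variant.)

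\textbf{Descent inequality.} Applying $L$-smoothness along the trajectory in the max-norm gives
\[
\Ft(\mathbf{x}_{t+1}) \le \Ft(\mathbf{x}_t) + \eta\inner{\nabla\Ft(\mathbf{x}_t)}{\mathbf{o}_t} + \tfrac{L\eta^2}{2}.
\]
Write $\nabla\Ft = \mathbf{m}_t - \bm\epsilon_t$. Then
\[
\inner{\nabla\Ft}{\mathbf{o}_t} \le -\norm{\mathbf{m}_t}_* + \norm{\bm\epsilon_t}_* \le -\norm{\nabla\Ft(\mathbf{x}_t)}_* + 2\norm{\bm\epsilon_t}_*.
\]
Telescoping yields
\[
\frac1T\sum_t \E\norm{\nabla\Ft(\mathbf{x}_t)}_* \le \frac{\Delta_1}{\eta T} + \frac{L\eta}{2} + \frac2T\sum_t\E\norm{\bm\epsilon_t}_*.
\]

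\textbf{Momentum error.} We unroll
\[
\bm\epsilon_t = \beta_1\bm\epsilon_{t-1} + \beta_1\bigl(\nabla\Ft(\mathbf{x}_{t-1})-\nabla\Ft(\mathbf{x}_t)\bigr) + (1-\beta_1)\bigl(\text{noise}_t\bigr),
\]
using the EMA normalization; with the unnormalized $\M$ of Algorithm~\ref{alg:algname} the rescaling is absorbed into $\eta$. This splits $\bm\epsilon_t$ into three parts: a geometric initial term $\beta_1^{t-1}\bm\epsilon_1$, a drift term, and a martingale noise term. The drift term is bounded by smoothness in dual norm, $\norm{\nabla\Ft(\mathbf{x}_{s})-\nabla\Ft(\mathbf{x}_{s+1})}_*\le L\eta$, which gives a contribution $\le \beta_1 L\eta/(1-\beta_1)$. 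The noise term is the crucial step. Use $\norm{\cdot}_*\le \zeta_\g\norm{\cdot}_2 + \zeta_\Rbf\normF{\cdot}$ blockwise, then Jensen and orthogonality of martingale increments in the Euclidean and Frobenius norms. This gives
\[
\E\norm{\text{noise part}}_* \le (1-\beta_1)\bigl(\zeta_\g\sigg+\zeta_\Rbf\sigR\bigr)\Bigl(\sum_s\beta_1^{2(t-s)}\Bigr)^{1/2} \le \sqrt{1-\beta_1}\,\hat\sigma.
\]
The initial term averages to $\hat\sigma/((1-\beta_1)T)$. Altogether
\[
\frac1T\sum_t\E\norm{\nabla\Ft}_*\lesssim \frac{\Delta_1}{\eta T}+\frac{L\eta}{1-\beta_1}+\sqrt{1-\beta_1}\,\hat\sigma+\frac{\hat\sigma}{(1-\beta_1)T}.
\]

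\textbf{Parameter tuning (main bookkeeping obstacle).} Plugging in $\eta=\sqrt{\Delta_1(1-\beta_1)/(LT)}$ makes the first two terms $\sqrt{\Delta_1 L/((1-\beta_1)T)}$. Setting $1-\beta_1=\min\{1,\max\{T^{-2/3},\sqrt{\Delta_1L/(\hat\sigma^2T)}\}\}$ then balances this against $\sqrt{1-\beta_1}\,\hat\sigma$. We split into cases on which argument of the max/min is active:
\begin{itemize}
\item When $1-\beta_1=\sqrt{\Delta_1L/(\hat\sigma^2T)}$, both terms become $(\Delta_1L\hat\sigma^2/T)^{1/4}$. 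The initial term $\hat\sigma/((1-\beta_1)T)$ is at most $\hat\sigma T^{-1/3}$, since $1-\beta_1\ge T^{-2/3}$.
\item When $T^{-2/3}$ dominates, $\sqrt{1-\beta_1}\,\hat\sigma=\hat\sigma T^{-1/3}$.
\item When the min is $1$, the deterministic $\sqrt{\Delta_1L/T}$ term appears.
\end{itemize}
Tracking constants through these cases yields $6,8,4$. I expect the delicate point to be that $L$-smoothness must be valid with respect to the product max-norm and only along the trajectory, including that $\Rbf_t$ stays in $\rowPos$. That is assumed, so the remaining care is in the dual-norm Lipschitz step for the drift term.
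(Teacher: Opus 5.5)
Your proof is correct and uses the same machinery as the paper's proof: the standard (Cutkosky--Mehta) momentum-error decomposition into an initial, a drift and a martingale part; the constants $\zeta_\g,\zeta_\Rbf$ together with $L^2$-orthogonality of the noise; and the identical choice of $\eta$ and $\beta_1$. The difference is structural. The paper first proves Lemma~\ref{lem:split_algs}, which bounds $\max\{\gamma^2\norm{\Delta\g}_\infty^2,\eta^2\opnorm{\Delta\Rbf}^2\}$ by the sum. It then runs a Signum analysis on $\g$ and a Muon analysis on $\Rbf$ separately, with one momentum error and one drift bound per block, and adds the results. You instead use that the LMO of the product max-norm decouples, and treat $(\g,\Rbf)$ as a single normalized-steepest-descent step.

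Keeping the max gives $L\eta^2/2$ instead of $L\eta^2$ in the descent step and a single drift bound $L\eta$ in the joint dual norm. Tracking your own inequalities therefore yields constants $3$, $5$, $4$ in place of $6$, $8$, $4$. The price is that your formulation fixes $\gamma=\eta$ from the outset (as the theorem does anyway). The paper's split keeps the two step sizes separate until the end and reuses the \signsgd{} and Muon analyses verbatim.

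Three small points should be made explicit:
\begin{itemize}
\item Nothing needs to be absorbed into $\eta$. Both $\sgn$ and the spectral LMO are invariant under positive rescaling, so working with $(1-\beta_1)\m_t$ and $(1-\beta_1)\M_t$ leaves the iterates unchanged.
\item Assumption~\ref{assum:bounded_var} is stated in $\ell_1$/Schatten-$1$, so your Euclidean/Frobenius second-moment bounds come from $\norm{\cdot}_2\le\norm{\cdot}_1$ and $\normF{\cdot}\le\trnorm{\cdot}$, exactly as in the paper.
\item Your bound on the initial term, like the paper's, needs $(1-\beta_1)\m_1$ and $(1-\beta_1)\M_1$ to equal the first stochastic gradient. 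The literal initialization of $\m_0,\M_0$ in Algorithm~\ref{alg:muown_signgd} does not produce this; it gives $(1-\beta_1^2)$ times that gradient.
\end{itemize}
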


The guarantee is stated for the dual norm of the re-parameterized gradient, and the dominant term matches the optimal stochastic non-convex rate up to constants \citep{LowerBoundsNonConvex2023arjevani}, mirroring the rate established for Muon by \cite[Corollary~4.4]{ConvergenceAnalysisMuon2025shen}. Moreover, since $\norm{\cdot}_1 \geq \norm{\cdot}_2$ and $\trnorm{\cdot} \geq \normF{\cdot}$, the bound is strictly stronger than its $\ell_2$/Frobenius counterpart. We caution that the bound concerns $\nabla \Ft$. Although $\Ft(\g, \Rbf) = \mathcal L(\W)$ pointwise, $\nabla \Ft$ and $\nabla \mathcal L$ need not be directly comparable. The only algorithm-dependent factor is the noise coefficient $\hat\sigma = \zeta_\g \sigg + \zeta_\Rbf \sigR$, where the norm-equivalence constants $\zeta_\g, \zeta_\Rbf$ arise, as in recent Muon analyses \citep[Theorem~4.3]{ConvergenceAnalysisMuon2025shen}, from controlling the stochastic error in $\ell_1/\text{Schatten-}1$ from $\ell_2/\text{Frobenius}$ variance bounds. In comparison, the corresponding factor for Muon is $\zeta_\W \sigma_\W$, hence our theory predicts an improvement of \algname{} over Muon whenever $\zeta_\g \sigg + \zeta_\Rbf \sigR < \zeta_\W \sigma_\W$. Section~\ref{sec:experiments} and Fig.~\ref{fig:scaled_cm_comparison} verify this inequality empirically throughout training, with the gap widening up to a factor of $1.6\times$, offering reduced gradient noise in the $(\g,\Rbf)$-parameterization as a candidate mechanism for explaining the observed perplexity gains.

In the proof, we first use our choice of geometry to split the descent Lemma into independent $\g$- and $\Rbf$-terms (Lemma~\ref{lem:split_algs}). We then apply the steepest-descent analyses of \signsgd{} and Muon, and recombine without cross-terms. The full proof can be found in Appendix~\ref{sec:app.conv_analysis}.

\section{Experiments}
\label{sec:experiments}

In this section, we evaluate \algname{} on language-model pre-training with modern GPT-style architectures on FineWeb-Edu. This section is designed to answer \emph{(i)} whether \algname{} can improve convergence relative to current state-of-the-art geometry-aware optimizers at a fixed token budget, \emph{(ii)} whether it reduces sensitivity to learning rate and weight decay tuning, and \emph{(iii)} whether its convergence behavior is better conditioned, as indicated by gradient rank and gradient noise. We conclude this section with a discussion of the computational overhead. 

\begin{wraptable}{r}{0.58\textwidth}
        \centering
        \vspace{-10pt}
        \caption{Perplexity for \algname{}, Muon, and SOAP on a 500M model for different learning rates. We report average and standard deviation across 3 seeds.}
        \label{tab:500M-multiseed}
        \resizebox{\linewidth}{!}{
        \begin{tabular}{l cc cc cc}
            \toprule
            $\eta$ &
            \multicolumn{2}{c}{\algname{}} &
            \multicolumn{2}{c}{Muon} & \multicolumn{2}{c}{SOAP} \\
            \cmidrule(lr){2-3}\cmidrule(lr){4-5} \cmidrule(lr){6-7}
            & \multicolumn{2}{c}{$\lambda=0.0$} 
            & $\lambda=0.1$ & $\lambda=0.0$ & \multicolumn{2}{c}{$\lambda=10^{-4}$} \\
            \midrule
            $4\times 10^{-3}$   & \multicolumn{2}{c}{$14.13_{\pm 0.004}$} & $28.37_{\pm 23.59}$ & $88.15_{\pm 3.869}$ & \multicolumn{2}{c}{$14.53_{\pm 0.017}$} \\
            $2\times 10^{-3}$   & \multicolumn{2}{c}{$14.14_{\pm 0.027}$} & $14.37_{\pm 0.028}$ & $14.40_{\pm 0.042}$ & \multicolumn{2}{c}{$14.47_{\pm 0.022}$} \\
            $1\times 10^{-3}$   & \multicolumn{2}{c}{$14.24_{\pm 0.026}$} & $14.44_{\pm 0.039}$ & $14.54_{\pm 0.041}$ & \multicolumn{2}{c}{$14.49_{\pm 0.036}$}\\
            \bottomrule
        \end{tabular}%
        }
    \vspace{-10pt}
\end{wraptable}

\textbf{Setup.} We adopt the experimental setup of \cite{ajroldi2024plainlm} which is based on a nanoGPT \citep{karpathy2022nanogpt} implementation augmented with recent architectural improvements such as RoPE \citep{su_rope}, RMSNorm normalization \citep{zhang_rmsnorm}, and SwiGLU activations \citep{shazeer_glu_2020}. We use a warmup-stable-decay learning rate schedule \citep{hu_minicpm_2024} and train most models at four sizes: 124M, 500M, 1B, and 2.7B. We focus on perplexity at a fixed token budget as a primary evaluation criterion. All training details can be found in Appendix~\ref{appdx:experimental-details}.

\textbf{Pre-training Perplexity.} We organize pre-training results by scale: a full learning-rate sweep at 124M (Fig.~\ref{fig:lr-ablation}), a multi-seed comparison at 500M (Table~\ref{tab:500M-multiseed}), and single-seed runs at 1B and 2.7B (Table~\ref{tab:1B-results}, Fig.~\ref{fig:3B-plot}). Unless explicitly stated otherwise,
\algname{} is run without weight decay ($\lambda=0$). For Muon we use $\lambda=0.1$. Both of these we find to be optimal across $\lambda \in \{0, 0.01, 0.03, 0.1, 0.3\}$ on the 124M grid (Fig.~\ref{fig:weight-decay-ablation}; discussed in detail below). We additionally include $\lambda = 0$ for Muon. 

At the 124M scale, Fig.~\ref{fig:lr-ablation} compares \algname{} against Muon, SOAP, and Lion across eight log-spaced learning rates, both with and without weight decay on the Muon-family optimizers. \algname{} attains the lowest perplexity at \emph{every} learning rate in the grid and degrades gracefully at the top of the range, whereas Muon diverges for learning rates at which \algname{} is still near-optimal.

At 500M (Table~\ref{tab:500M-multiseed}), \algname{} retains at least a $0.2$ perplexity improvement over the best Muon configuration at every matched learning rate, significantly larger than the per-seed standard deviation of stable non-divergent runs. The stability gap widens at $\eta=4\times 10^{-3}$: Muon diverges ($88.15$ at $\lambda=0$) or exhibits large seed variance ($28.37 \pm 23.59$ at $\lambda=0.1$), while \algname{} gets closer to optimality, corroborating the robustness signature seen at 124M.

The gap persists at the 1B and 2.7B scale. At 1B (Table~\ref{tab:1B-results}), \algname{} attains $11.90$ perplexity at $\eta=3\times 10^{-3}$, improving upon the best Muon run ($12.20$ at $\eta=2\times 10^{-3}$, $\lambda=0$) by $0.30$. At 2.7B (Fig.~\ref{fig:3B-plot}), \algname{} reaches $9.57$ at $\eta=2\times 10^{-3}$ while the best stable Muon run lands at $9.82$ ($\eta=10^{-3}$, $\lambda=0$), preserving a $0.25$ perplexity advantage for \algname{}. The consistent perplexity gap from 500M up to 2.7B suggests that the improvement does not attenuate with scale within the regime we reach.

Two additional experiments probe whether these gains transfer beyond the architecture and the four baselines above. On a Qwen2-0.5B \citep{yang_qwen2_2024} architecture (Appendix~\ref{sec:qwen2-exp}), \algname{} preserves a $0.25$ perplexity advantage over the best-tuned Muon configuration. Moreover, compared to NorMuon \citep{li_normuon_2025}, a per-row Muon variant that adapts the orthogonalized momentum via a row-wise second-moment statistic (Appendix~\ref{appdx:normuon-comparison}), \algname{} matches or improves perplexity in the near-optimal range and exhibits a wider basin of stability at 124M.

\begin{wraptable}{r}{0.62\textwidth}
    \centering
    \small
    \vspace{-10pt}
    \setlength{\tabcolsep}{6pt}
    \renewcommand{\arraystretch}{1.05}
    \caption{Perplexity for \algname{} and Muon on 1B and 2.7B models across learning rates. We also report perplexity of AdamW at 1B. $^{\dagger}$Run aborted after 12h on 16 GPUs as validation loss was diverging.}
    \label{tab:1B-results}
    \begin{tabular}{l l c cc c}
        \toprule
        Scale & $\eta$
         & \algname{} & \multicolumn{2}{c}{Muon} & AdamW \\
        \cmidrule(lr){3-3}\cmidrule(lr){4-5}\cmidrule(lr){6-6}
         &
         & $\lambda=0$ & $\lambda=0.1$ & $\lambda=0$ & $\lambda= 0.1$ \\
        \midrule
        \multirow{4}{*}{1B}
          & $3   \times 10^{-3}$ & 11.90 & 44.89 & 110.19 & 303.16 \\
          & $2   \times 10^{-3}$ & 11.93 & 12.33 &  12.20 &  12.69 \\
          & $1   \times 10^{-3}$ & 12.01 & 12.27 &  12.32 &  12.93 \\
          & $5   \times 10^{-4}$ & 12.18 & 12.35 &  12.44 &  12.85 \\
        \midrule
        \multirow{3}{*}{2.7B}
          & $2   \times 10^{-3}$ & 9.57  & --- $^\dagger$    &  --- $^\dagger$    & \phantom{--} \\
          & $1   \times 10^{-3}$ & 9.63  & 10.01 &  9.82  & \phantom{--} \\
          & $7.5 \times 10^{-4}$ & 9.66  & 9.96 &  9.87 & \phantom{--} \\
        \bottomrule
    \end{tabular}
\end{wraptable}
\textbf{Robustness to Hyperparameter Choices.} We assess robustness along two axes: width-dependence of the optimal learning rate (Fig.~\ref{fig:width-ablation}) and sensitivity to weight decay
(Fig.~\ref{fig:weight-decay-ablation}). Following the protocol of \cite{pethick_training_2025}, we sweep a
log-spaced learning-rate grid from $2^{-16}$ to $2^{-5}$ across three widths of $512$, $768$, and $1280$ (corresponding to $67$M, $124$M, and $308$M parameters) at Chinchilla-optimal token count. \algname{} yields near-optimal validation loss over a wide plateau of learning rates spanning $2^{-10}$ to $2^{-5}$ at every width, whereas Muon's optimal learning rate shifts downward with increasing width and its near-optimal plateau narrows. This indicates that \algname{} slightly alleviates the need for per-scale learning-rate retuning. To rule out an implementation-specific artefact, we reproduce the same qualitative phenomenon using the widely-used \texttt{modded-nanogpt} \citep{modded_nanogpt_2024} setup
(Fig.~\ref{fig:modded-nanogpt-learning-rate}).

\begin{figure}[t]
        \centering
    \begin{subfigure}[t]{0.32\textwidth}
        \includegraphics[width=\textwidth]{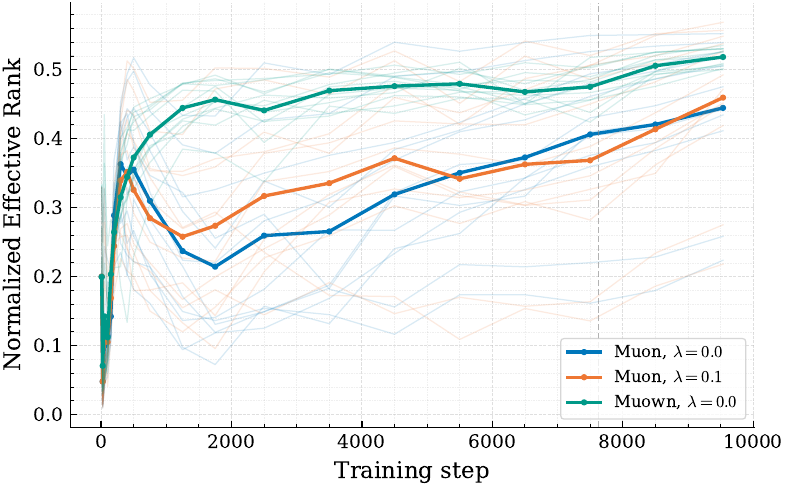}
        \caption{Query}
        \label{fig:eff-rank-w_q}
    \end{subfigure}\hfill
    \begin{subfigure}[t]{0.32\textwidth}
        \includegraphics[width=\textwidth]{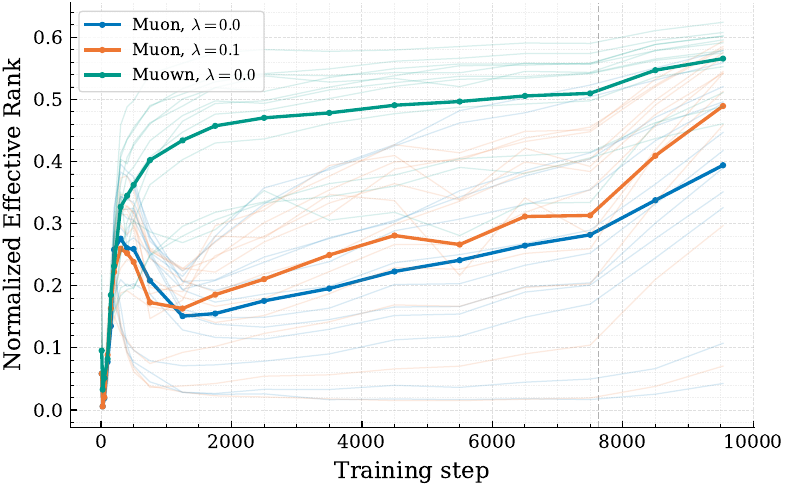}
        \caption{Output}
        \label{fig:eff-rank-w_out}
    \end{subfigure}
    \begin{subfigure}[t]{0.32\textwidth}
         \includegraphics[width=\textwidth]{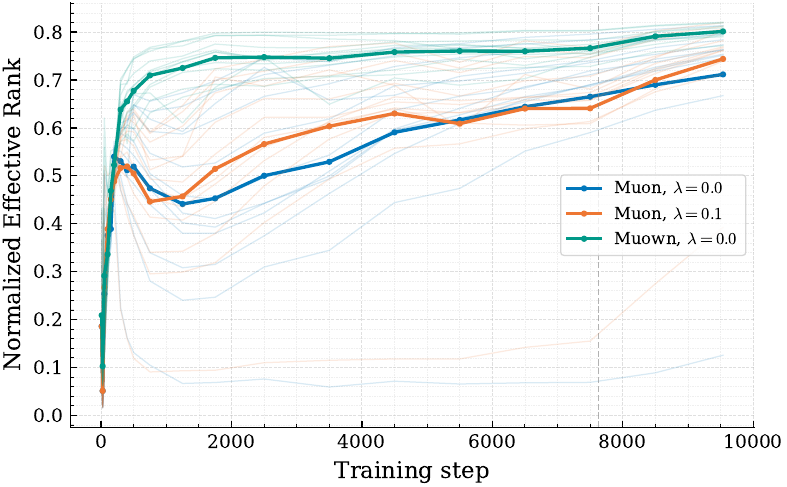}
         \caption{MLP Layer 1}
         \label{fig:eff-rank-fc1}
    \end{subfigure}\hfill
    \caption{Normalized effective rank \citep{roy_effective_2007} of minibatch gradients for a 124M model. At a given training step, we sample 8 gradients and average their effective ranks.}
    \label{fig:gradient-conditioning}
    \vspace{-20pt}
\end{figure}

Fig.~\ref{fig:weight-decay-ablation} sweeps a two-dimensional grid of learning rate and weight decay $\lambda \in \{0, 0.01, 0.03, 0.1, 0.3\}$ for Muon and \algname{} at the 124M scale. At optimally tuned learning rate, weight decay provides a sizeable improvement in perplexity for Muon and is minimized at $\lambda=0.1$, justifying the Muon baseline configuration used in the pre-training perplexity comparisons above. In contrast, the same sweep of weight decay strength yields no significant gain for \algname{}.

\textbf{Effective Rank and Stochastic Noise.} We probe two properties of the stochastic minibatch gradients each optimizer orthogonalizes, $\nabla_{\W}\mathcal L$ for Muon and $\nabla_{\Rbf}\mathcal L$ for \algname{}, to investigate the perplexity gains reported above. First, following \citet{ahn_dion_2025}, we track the normalized effective rank $\mathrm{erank}(\boldsymbol\sigma)/\min(m,n)$ with $\mathrm{erank}(\boldsymbol\sigma) = \exp(-\sum_i p_i\log p_i)$, $p_i = \sigma_i/\|\boldsymbol\sigma\|_1$ \citep{roy_effective_2007} which can be read as the fraction of directions the optimizer effectively explores. Fig.~\ref{fig:gradient-conditioning} shows that \algname{} sustains a higher effective rank than Muon on every reported weight type throughout training. Second, Section~\ref{sec:conv_analysis} showed that \algname{}'s stochastic rate scales with $\hat\sigma = \zeta_\g\sigma_\g + \zeta_\Rbf\sigma_\Rbf$ while Muon's scales with $\zeta_\W\sigma_\W$ \citep{ConvergenceAnalysisMuon2025shen}, leaving the theoretical edge conditional on $\hat\sigma < \zeta_\W\sigma_\W$. Fig.~\ref{fig:scaled_cm_comparison} verifies this inequality per weight matrix along the training trajectory: \algname{}'s noise term stays below Muon's and the gap widens over training. Together, higher spectral diversity and lower stochastic noise in the $(\g, \Rbf)$-parameterization offer a mechanistic explanation for the observed perplexity gains.

\textbf{Resource Overhead.} Table~\ref{tab:timing-overhead} reports median end-to-end step time on a 500M model (forward and backward passes, gradient synchronization, and optimizer update).
\begin{wrapfigure}{r}{0.5\textwidth}
    \includegraphics[width=0.5\textwidth]{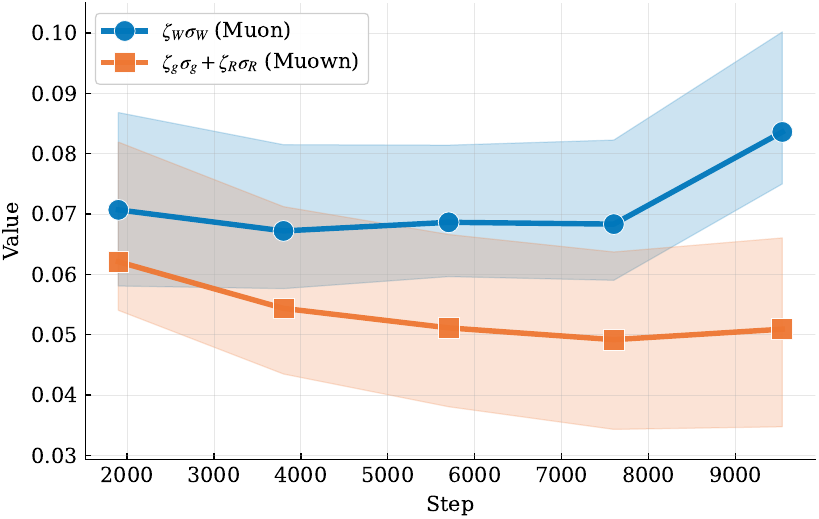}
    \caption{Comparison of the noise coefficients for Muon and \algname{}. Shaded areas mark the interquartile range over weights.}
    \vspace{-10pt}
    \label{fig:scaled_cm_comparison}
\end{wrapfigure}
Relative to Muon, \algname{} adds only elementwise $\mathcal{O}(mn)$ work per layer (the weight-norm decomposition and recomposition together with the Adam update on $\g \in \mathbb{R}^m$) asymptotically dominated by the shared Newton-Schulz iteration at $\mathcal{O}(m^2 n)$ for $m \le n$. We benchmark two variants: a \emph{replicated} baseline that runs the optimizer step on every rank, and a \emph{distributed} variant (under DDP) that follows \cite{modded_nanogpt_2024}, sharding the per-layer optimizer work round-robin across ranks and concluding with an all-gather. In the replicated setting the surplus materializes as a measurable $\approx 20\,\text{ms}$ ($1.5\%$) per step; once sharded in the same pass as Newton-Schulz, step times for Muon and \algname{} become indistinguishable, with the gap shrinking further as the rank count grows. This amortization relies on exposing the reparameterization \emph{inside} the optimizer: at the module level, every rank would redundantly recompute the decomposition on its local replica of $\W$ at every forward pass. Communication is unchanged from Muon: only $\W$ is all-gathered, while $\g$ and the reconstructed $\Rbf$ remain rank-local.

\algname{} introduces \emph{no} additional $\mathcal{O}(mn)$ state on top of Muon, since $\Rbf$ is reconstructed inside the optimizer from the stored effective weight $\W$. The only new states are four $\mathbb{R}^m$ vectors per linear layer ($\g$, the cached row norm $\|\Rbf\|_{\mathrm{row}}$, and the Adam first- and second-moment buffers for $\g$) amounting to $\approx 2\,\text{MB}$ of additional peak memory in the runs of Table~\ref{tab:timing-overhead}, negligible next to Muon's momentum buffer.

\section{Conclusion}
\label{sec:conclusion}

\begin{wraptable}{r}{0.3\textwidth}
        \vspace{-12pt}
        \centering
        \small
        \setlength{\tabcolsep}{4pt}
        \caption{Median step time (s) for \algname{} and Muon on a 500M model (1K steps, 4 GH200 GPUs). }
        \label{tab:timing-overhead}
        \begin{tabular}{l cc cc c}
            \toprule
            Setting & Muon & \algname{} \\
            \midrule
            Replicated   & 1.34 & 1.36  \\
            Distributed   & 1.27 & 1.27  \\
            \bottomrule
        \end{tabular}%
        \vspace{-10pt}
\end{wraptable} 
We show that the spectral-norm drift observed under Muon is not an irreducible property of the optimizer. Instead, the decomposition of Proposition~\ref{prop:spectral-norm-decomposition} isolates the maximal row magnitude $\|\g_t\|_\infty$ as its empirical driver, while the row-coherence factor remains stable along the trajectory. We propose \algname{}, which acts on this driver at the parameterization level, lifting $\g$ to an optimizer variable updated under the $\ell_\infty$ geometry singled out by modular duality and leaving Muon unchanged on the direction. We provide evidence that this intervention can improve perplexity, widen the plateau of near-optimal learning rates across model widths, and alleviate the need to tune weight decay. These gains come at a memory cost linear in the number of rows and negligible step-time overhead once the optimizer is sharded across ranks. 

\textbf{Limitations and Future Work.\ } The decomposition $\W=\Diag(\g/\RowNorm{\Rbf})\Rbf$ is undefined when a row of $\W$ is exactly zero at initialization, making the direction unidentifiable. Architectures that depend on zero-initialized weight matrices, such as zero-initialized residual output projections or some LoRA adapters, therefore require either a small non-zero initialization or an opt-out from the reparameterization for the affected layers. Our empirical study is also confined to dense GPT-style architectures and pre-training horizons up to 53B tokens. The behavior on convolutional, mixture-of-experts, and substantially longer-horizon regimes remains open. A first natural direction for future work is to vary the geometry imposed on $\g$. Replacing $\ell_\infty$ with sparsity-inducing  $\ell_1$-updates would transform magnitude control into structured row-wise sparsity, offering an optimizer-level route to neuron pruning and therefore potentially more efficient inference.  

\section*{Acknowledgments}
This work was supported under project ID a0184 as part of the Swiss AI Initiative, through a small grant from the ETH Domain and computational resources provided by the Swiss National Supercomputing Centre (CSCS) under the Alps infrastructure. Kai Lion is supported by Swiss National Science Foundation (SNSF) Sinergia Funding No. 216600. Florian Hübler acknowledges financial support from the ETH research grant and Swiss National Science Foundation (SNSF) Project Funding No.~200021-207343, the Alexander von Humboldt Foundation, the European Union’s Horizon Europe research and innovation programme under grant agreements No.~101120237 (ELIAS), and No.~101070617 (ELSA). Views and opinions expressed are however those of the author only and do not necessarily reflect those of the European Union or European Commission. Neither
the European Union nor the granting authority can be held responsible for them. Antonio Orvieto acknowledges the financial support of the Hector Foundation. Niao He is supported by an ETH research grant funded through the ETH Zurich Foundation and by an SNSF Starting Grant.

\bibliographystyle{plainnat}
\bibliography{refs}

\newpage
\appendix

\begin{center}
{\Large  \bf Supplementary Document for \algname{}}
\end{center}
\tableofcontents
\newpage

\section{Additional Experiments}
\label{appdx:additional-experiments}

\subsection{nanoGPT Speedrunning}

\begin{wraptable}{r}{0.47\textwidth}
        \centering
        \vspace{-10pt}
        \caption{Perplexity for \algname{} and Muon on Qwen2-0.5B pre-training.}
        \label{tab:qwen2-0.5B}
        \begin{tabular}{l cc cc}
            \toprule
            $\eta$ &
            \multicolumn{2}{c}{\algname{}} &
            \multicolumn{2}{c}{Muon} \\
            \cmidrule(lr){2-3}\cmidrule(lr){4-5} 
            & \multicolumn{2}{c}{$\lambda=0.0$} 
            & $\lambda=0.1$ & $\lambda=0.0$  \\
            \midrule
            $1\times 10^{-2}$   & \multicolumn{2}{c}{$14.57$} & -- &  -- \\
            $8\times 10^{-3}$   & \multicolumn{2}{c}{$14.59$} & $46.77$ & $94.02$ \\
            $4\times 10^{-3}$   & \multicolumn{2}{c}{$14.65$} & $14.86$ & $14.83$ \\
            $2\times 10^{-3}$   & \multicolumn{2}{c}{$14.76$} & $14.82$ & $14.88$ \\
            $1\times 10^{-3}$   & \multicolumn{2}{c}{$14.95$} & $14.91$ & $15.03$ \\
            \bottomrule
        \end{tabular}%
        
    \vspace{-20pt}
\end{wraptable}

We compare \algname{} and Muon for different learning rates and widths using the \texttt{modded-nanoGPT} codebase and report the results in Fig.~\ref{fig:modded-nanogpt-learning-rate}. Similar to \cite{pethick_training_2025}, we choose a log-space learning grid from $2^{-12}$ to $2^{-5}$. Fig.\ \ref{fig:modded-nanogpt-learning-rate} confirms the conclusions drawn in the main text regarding the greater robustness with respect to ill-chosen learning rates. 

\begin{figure}
    \centering
    \includegraphics[width=0.5\linewidth]{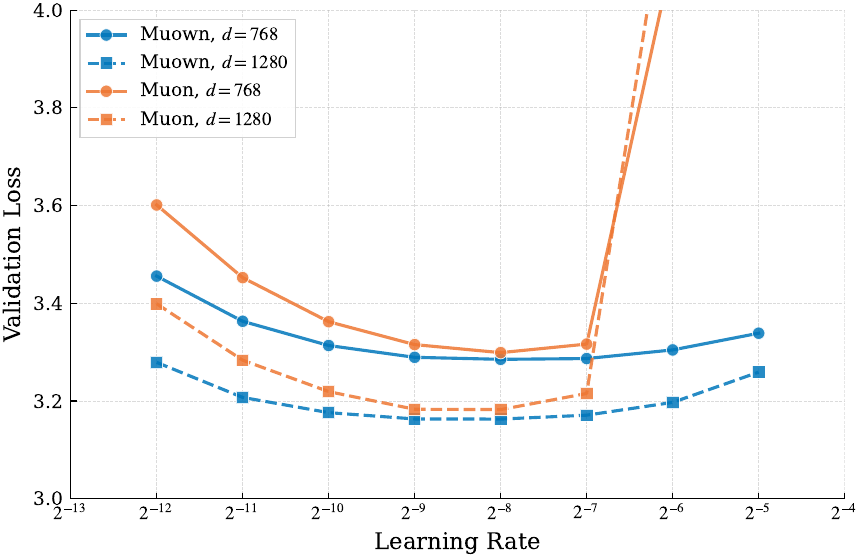}
    \caption{Learning rate and width ablation for Muon and \algname{} using the \texttt{modded-nanoGPT} codebase. }
    \label{fig:modded-nanogpt-learning-rate}
\end{figure}

\subsection{Qwen2-0.5B}
\label{sec:qwen2-exp}

As an additional architectural datapoint, we pre-train a Qwen2-0.5B \citep{yang_qwen2_2024} model, reporting results in Table~\ref{tab:qwen2-0.5B}. We remark that for all but one learning rate considered, \algname{} outperforms Muon and maintains a 0.25 perplexity advantage when comparing the best-performing hyperparameters.

\subsection{Comparison with NorMuon}
\label{appdx:normuon-comparison}

\begin{wrapfigure}{r}{0.5\textwidth}
    \includegraphics[width=0.5\textwidth]{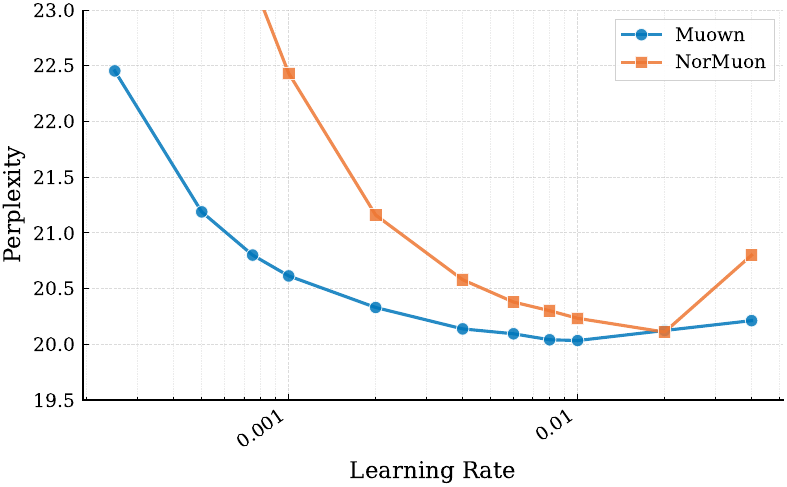}
    \caption{Comparison of NorMuon and \algname{} on a 124M model.}
    \vspace{-10pt}
    \label{fig:normuon-comparison}
\end{wrapfigure}
Both NorMuon \citep{li_normuon_2025} and Muown augment plain Muon with a per-row mechanism on the rows of the 2D weight matrices, but they intervene at different points in the update. Plain Muon orthogonalizes the momentum of $\nabla_\W \mathcal{L}$ via Newton–Schulz and applies it directly to the parameter without a row-wise state. NorMuon instead applies a post-processing step to the update, maintaining a per-row EMA of the mean-squared entries of the orthogonalized update, dividing the update element-wise by its square root, and renormalizing back to the original Frobenius norm. This procedure effectively redistributes mass of the update across rows. In contrast, rather than redistributing mass of the update \algname{} changes the parameterization to control the row-norms of the weight. In essence, both methods equip every row with a scalar quantity (a second-moment statistic for NorMuon and a learned magnitude for \algname{}). Nevertheless, these row variables serve different purposes. We compare both methods on a 124M model across eleven learning rates spanning $2.5 \times 10^{-4}$ to $4 \times 10^{-2}$. Muown matches or improves on NorMuon at most learning rates that lie near the optimum and exhibits a noticeably wider basin of stability.

\subsection{Spectral Analysis}
\label{appdx:spectral-experiments}

We provide more detailed results for the empirical association between $\norm{\W_t}_2^2$ and $\norm{\g_t}_\infty$. Figs.~\ref{fig:spectral-norm-detailed}, \ref{fig:row-norm-detailed}, and \ref{fig:lambda-max-detailed}, show the evolution of $\norm{\W_t}_2^2$, $\norm{\g_t}_\infty$, and $\lambda_{\max}(\PP_t\,\C_t\,\PP_t)$, respectively. Each row depicts the quantities mentioned above computed on the weights of a transformer block of a 500M model trained with either of Muon and \algname{}. We remark that for all layers and weight types considered the spectral norm and maximal row norm are moving in lock step, while $\lambda_{\max}(\PP_t\,\C_t\,\PP_t)$ remains bounded.

Moreover, we provide more detailed results on the spectral distribution of the layers in the 16th transformer block in Figs.~\ref{fig:spectral-analysis-detailed}, \ref{fig:spectral-histogram-detailed}, and \ref{fig:spectral-distribution-lineplot}.

\begin{figure}
    \centering
    \includegraphics[width=\linewidth]{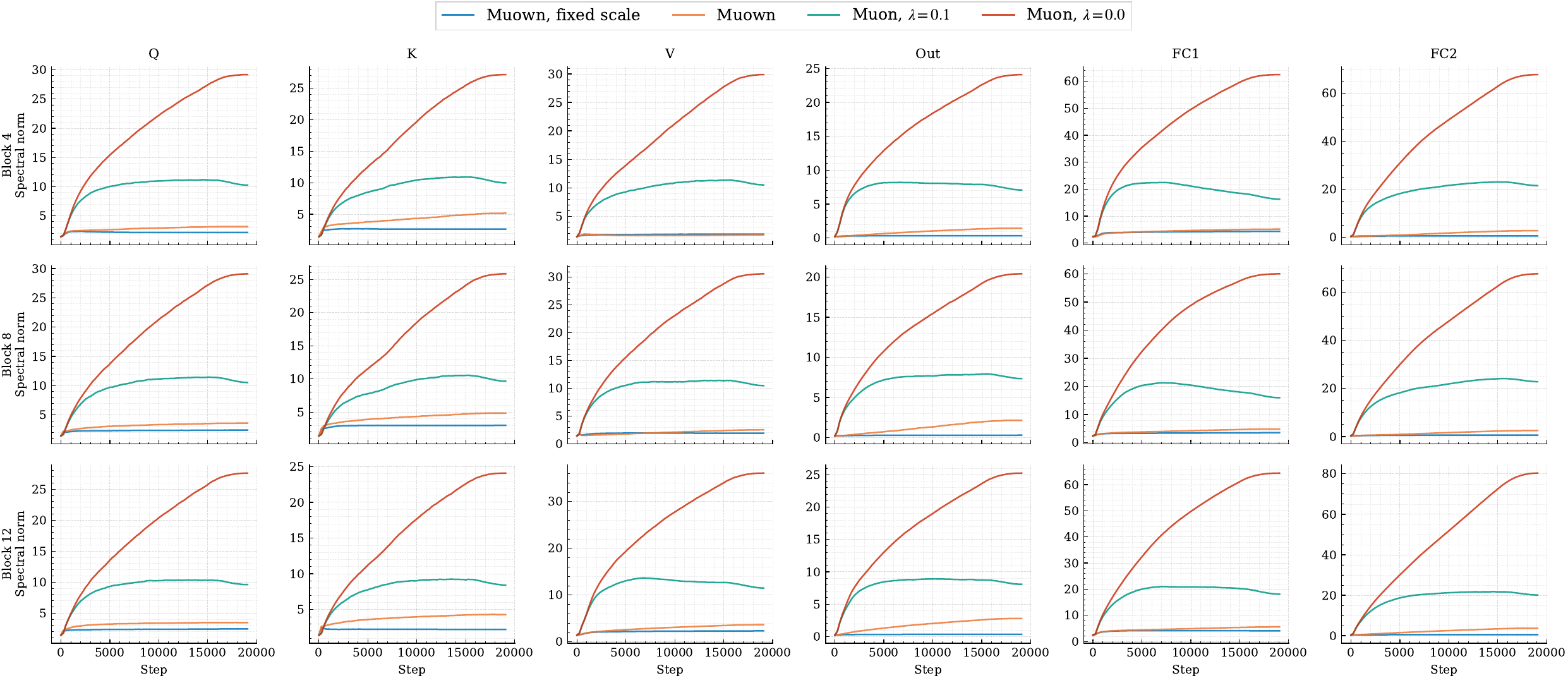}
    \caption{$\opnorm{\W_t}$ for the layers in the 4th, 8th, and 12th transformer block of a 500M model.}
    \label{fig:spectral-norm-detailed}
\end{figure}

\begin{figure}
    \centering
    \includegraphics[width=\linewidth]{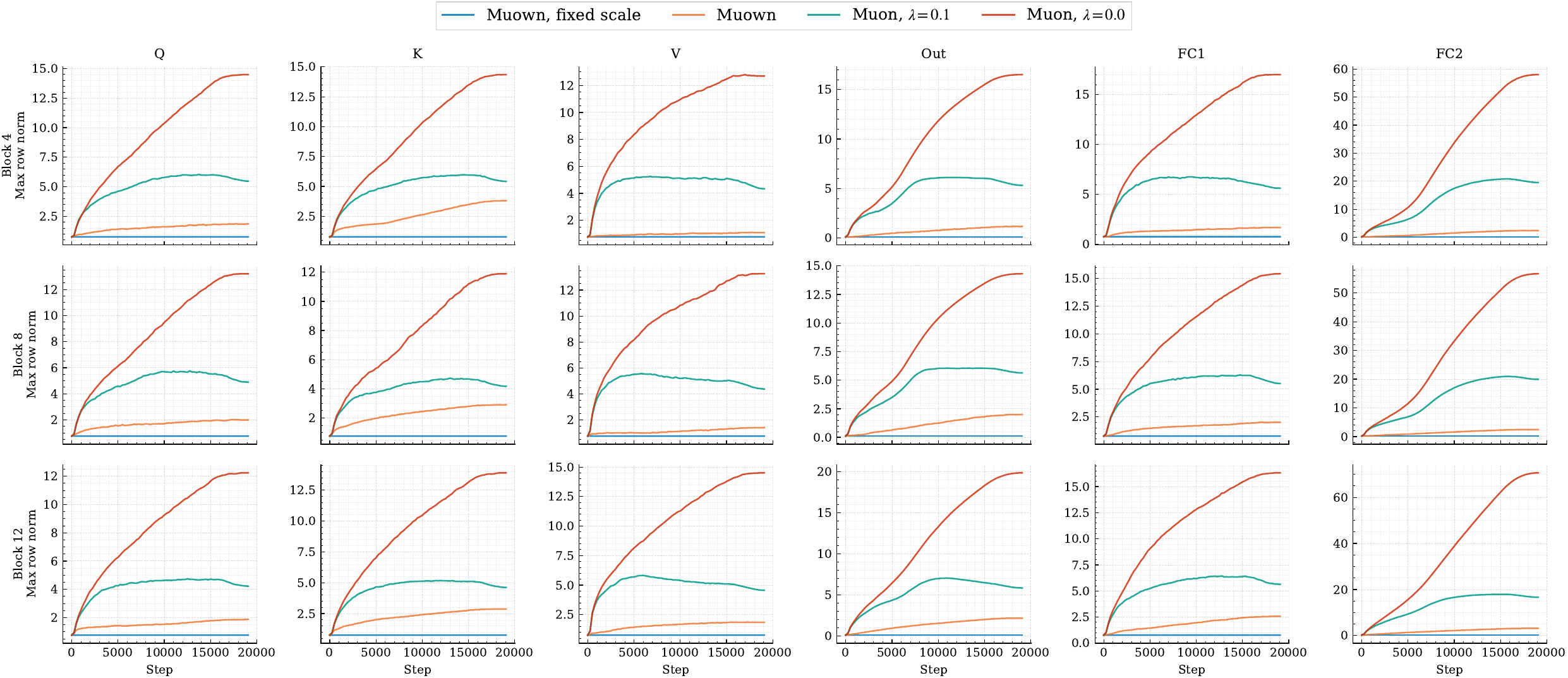}
    \caption{$\norm{\g_t}_\infty$ for the layers in the 4th, 8th, and 12th transformer block of a 500M model.}
    \label{fig:row-norm-detailed}
\end{figure}

\begin{figure}
    \centering
    \includegraphics[width=\linewidth]{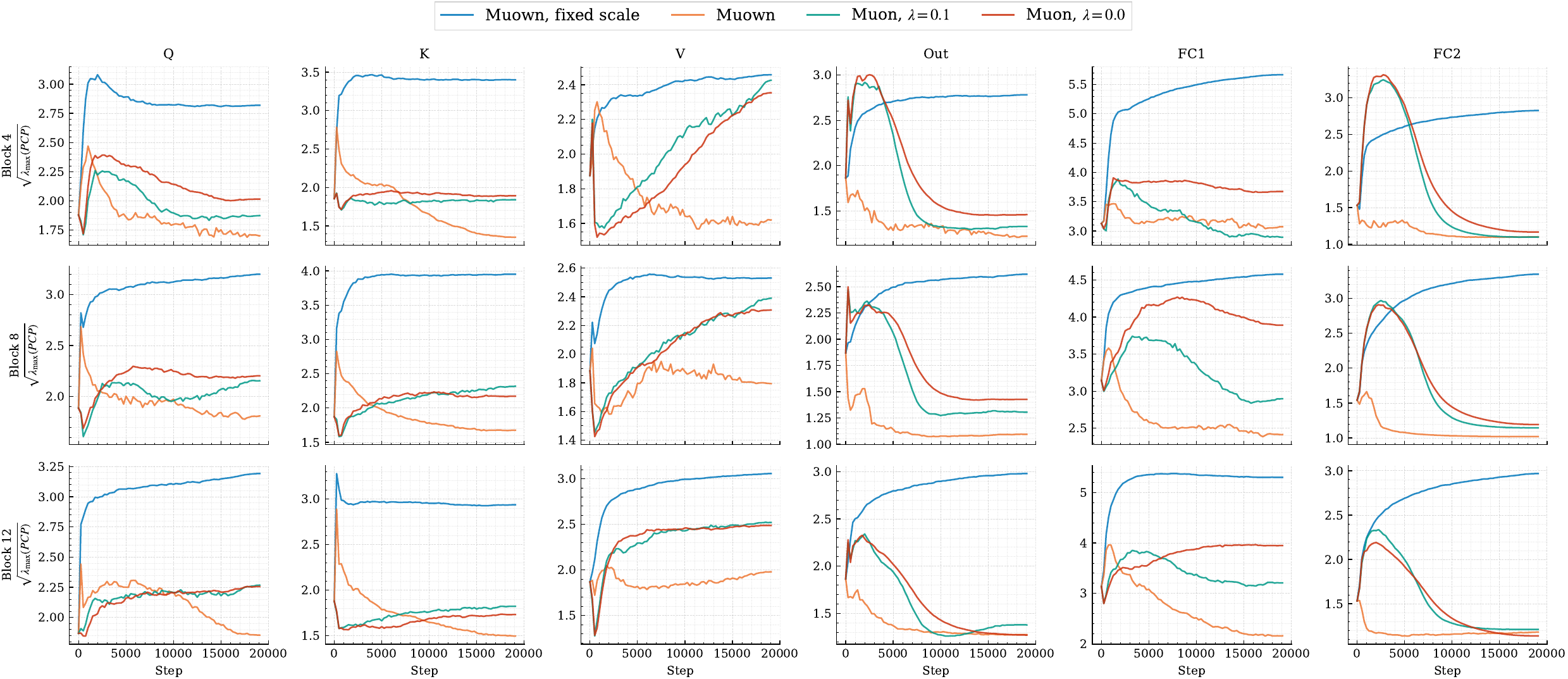}
    \caption{$\sqrt{\lambda_{\max}(\PP_t\,\C_t\,\PP_t)}$ for the layers in the 4th, 8th, and 12th transformer block of a 500M model.}
    \label{fig:lambda-max-detailed}
\end{figure}

\begin{figure}[t]
    \centering
    \begin{subfigure}[t]{0.245\textwidth}
        \centering
        \includegraphics[width=\textwidth]{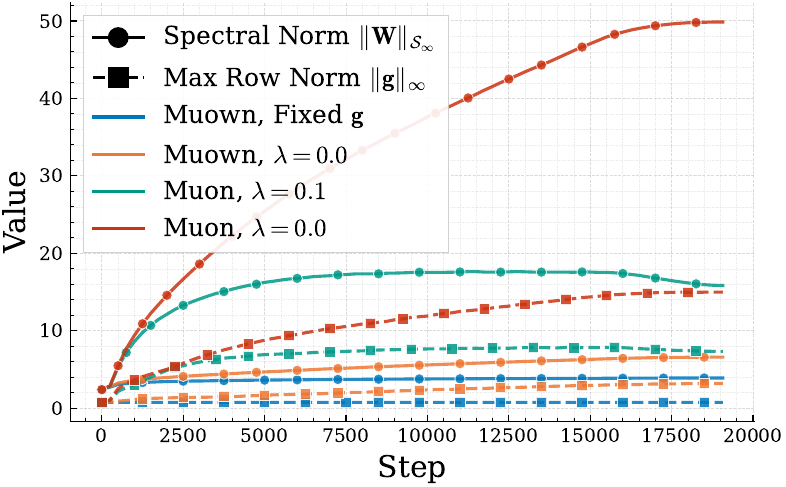}
        \caption{MLP Layer 1}
        \label{fig:spectral-analysis-detailed-mlp1}
    \end{subfigure}\hfill
    \begin{subfigure}[t]{0.245\textwidth}
        \centering
        \includegraphics[width=\textwidth]{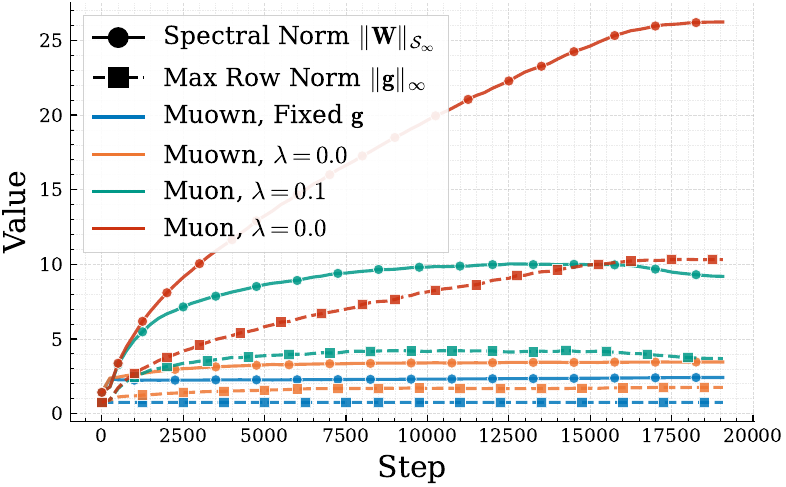}
        \caption{Query Projection}
    \end{subfigure}\hfill
    \begin{subfigure}[t]{0.245\textwidth}
        \centering
        \includegraphics[width=\textwidth]{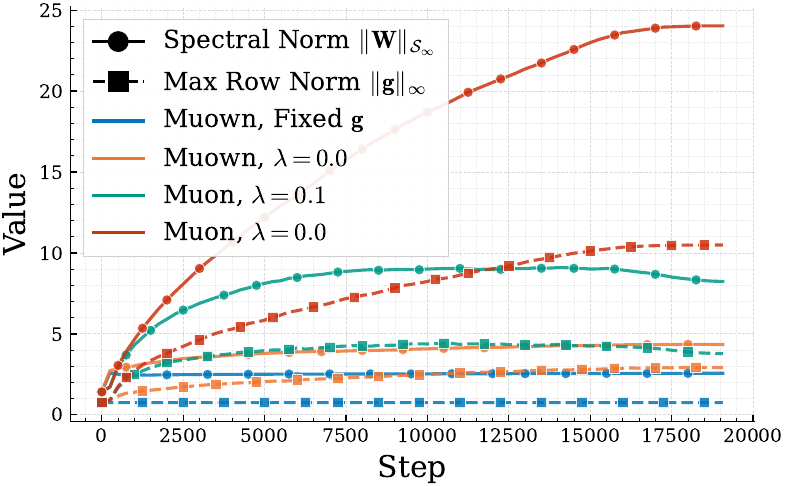}
        \caption{Key Projection}
    \end{subfigure}
    \begin{subfigure}[t]{0.245\textwidth}
        \centering
        \includegraphics[width=\textwidth]{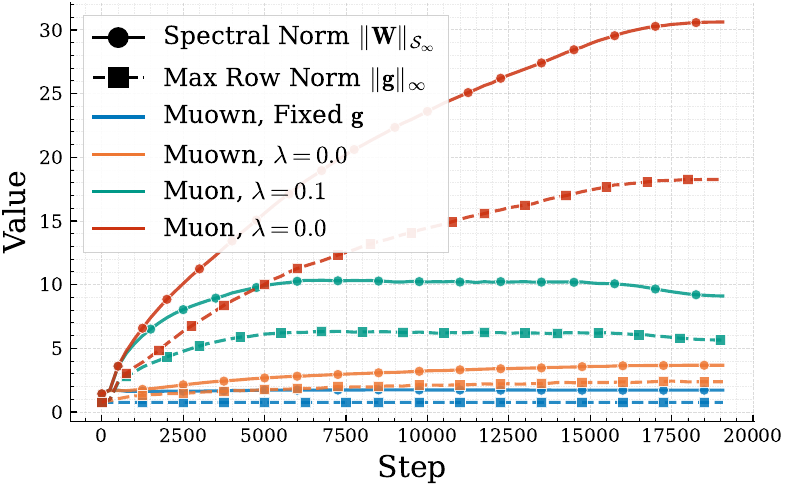}
        \caption{Value Projection}
    \end{subfigure}

    \caption{Evolution of spectral norm $\opnorm{\W_t}$ and maximum row norm $\|\g_t\|_{\infty}$ over the course of training for linear layers of the 16th transformer block of a 500M model. }
    \label{fig:spectral-analysis-detailed}
\end{figure}

\begin{figure}[t]
    \centering
    \begin{subfigure}[t]{0.245\textwidth}
        \centering
        \includegraphics[width=\textwidth]{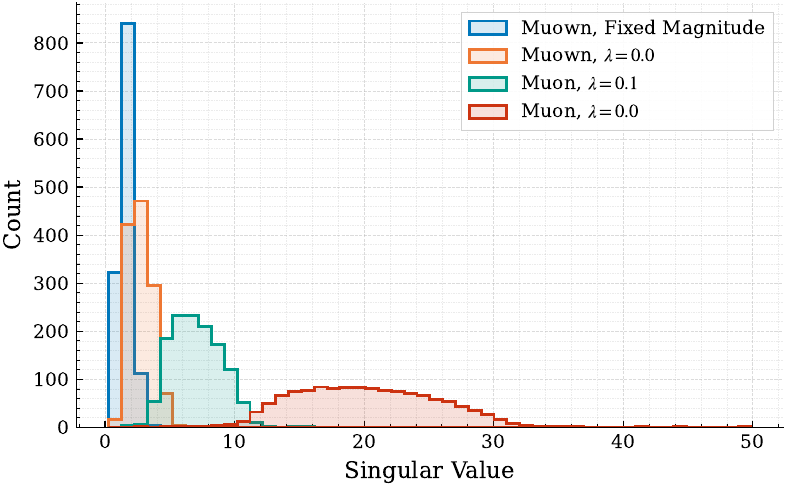}
        \caption{MLP Layer 1}
    
    \end{subfigure}\hfill
    \begin{subfigure}[t]{0.245\textwidth}
        \centering
        \includegraphics[width=\textwidth]{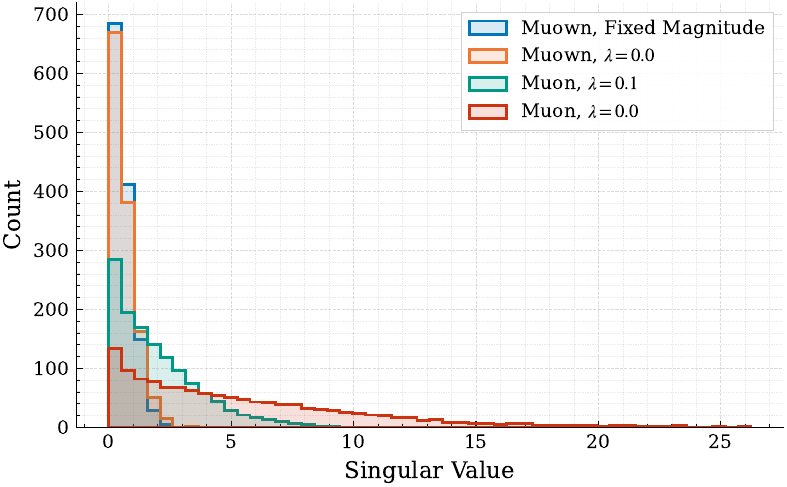}
        \caption{Query Projection}
    \end{subfigure}\hfill
    \begin{subfigure}[t]{0.245\textwidth}
        \centering
        \includegraphics[width=\textwidth]{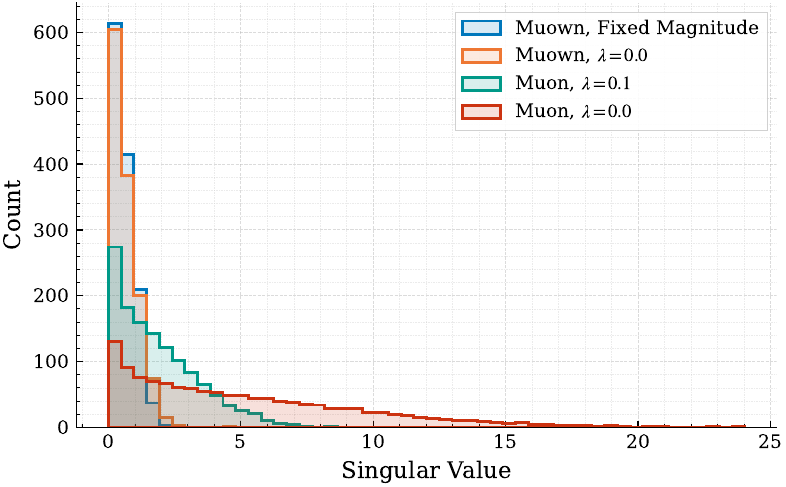}
        \caption{Key Projection}
    \end{subfigure}
    \begin{subfigure}[t]{0.245\textwidth}
        \centering
        \includegraphics[width=\textwidth]{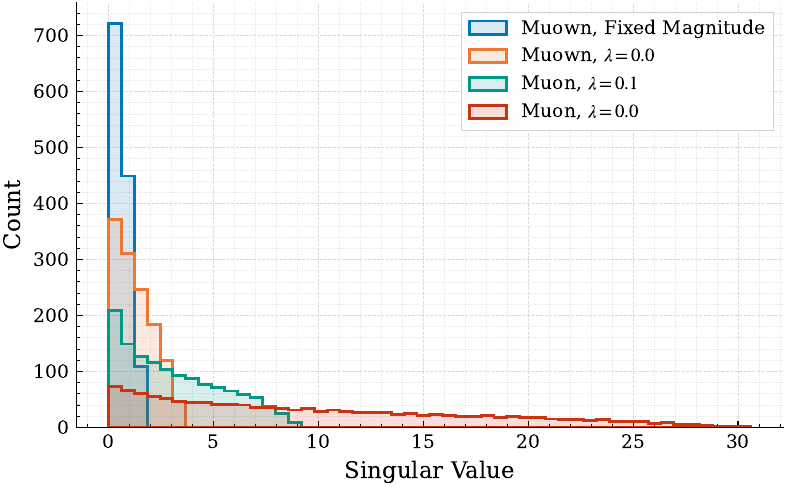}
        \caption{Value Projection}
    \end{subfigure}
        
    \caption{Histogram of singular values for linear layers of the 16th transformer block of a 500M model.}
    \label{fig:spectral-histogram-detailed}
\end{figure}

\begin{figure}
    \centering
    \includegraphics[width=\linewidth]{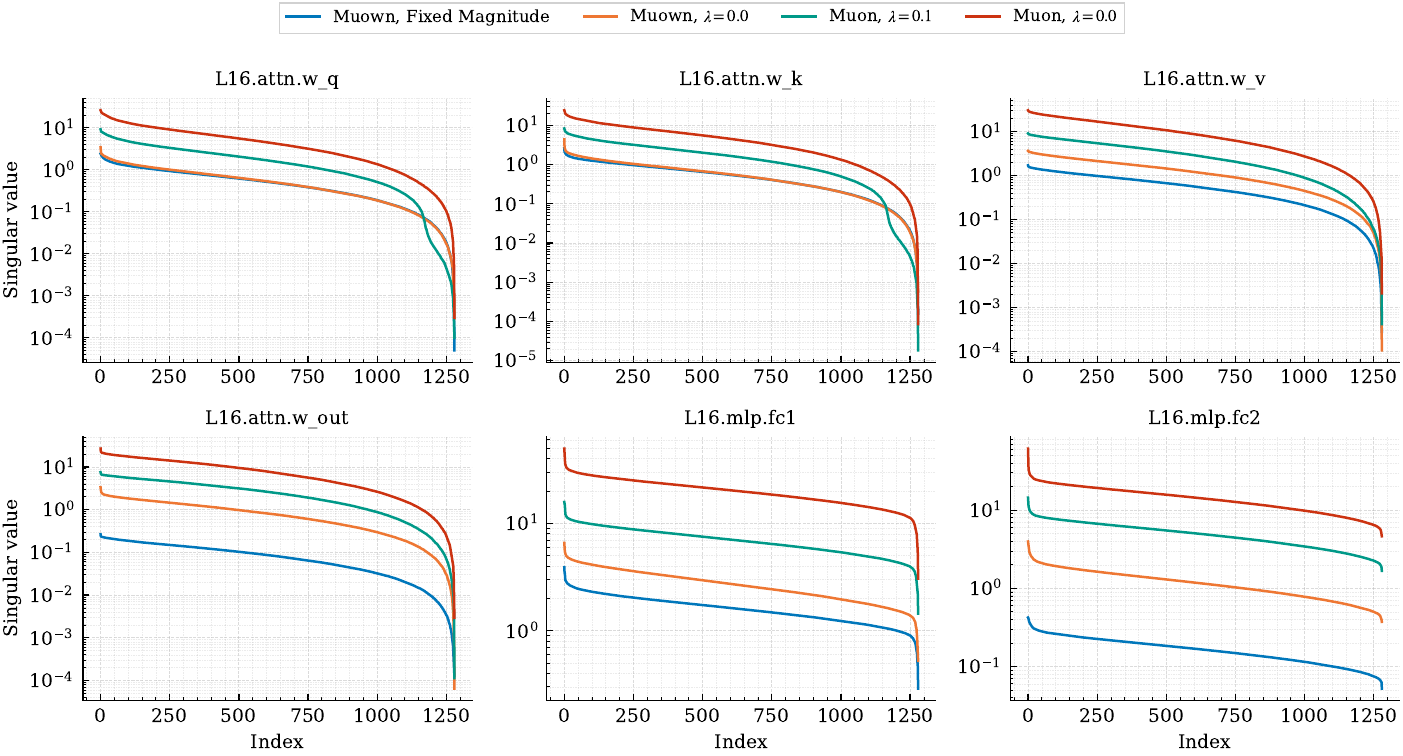}
    \caption{Singular value distribution for linear layers of the 16th transformer block of a 500M model. \texttt{attn} refers to the attention block with \texttt{w\_q}, \texttt{w\_k}, \texttt{w\_v}, \texttt{w\_out} denoting query, key, value, and output projections, respectively. \texttt{fc1} and \texttt{fc2} refer to the first and second MLP layers of the block.}
    \label{fig:spectral-distribution-lineplot}
\end{figure}

\subsection{Magnitude Learning}
\label{appdx:magnitude-learning}

In Section~\ref{sec:motivation}, we identify $\|\g_t\|_\infty$ as the empirical driver of $\opnorm{\W_t}$ drift under Muon and remark that freezing $\g_t = \g_1$ at initialization eliminates the systematic growth in spectral norm. Nevertheless, it remains open how effective such a strategy is in terms of perplexity. As Table~\ref{tab:magnitude-optimizer} demonstrates empirically, suppressing row-norm growth alone already accounts for a significant part of the improvement over Muon: \emph{Fixed}, which freezes row magnitudes at initialization, outperforms Muon at the smaller learning rates and prevents the divergence at $\eta = 4 \times 10^{-3}$ (Table~\ref{tab:magnitude-optimizer}), without learning the magnitudes. Training $\g$ on top, with either of SignSGD+Momentum (Signum) or Adam, refines the result further, with the ordering between Adam, Signum, and Fixed preserved across all three learning rates. While we find the \emph{Fixed} variant to perform slightly worse than the ones with trainable magnitudes, it can serve as a more lightweight alternative that does not require storage of the Adam states and further simplifies the update procedure.

\begin{table}
        \vspace{-10pt}
        \centering
        \setlength{\tabcolsep}{4pt}
        \caption{Perplexity for \algname{} on a 500M model using different magnitude optimizer choices.}
        \label{tab:magnitude-optimizer}
        \begin{tabular}{l ccc}
            \toprule
            $\eta$ & \multicolumn{3}{c}{\algname{}} \\
            \cmidrule(lr){2-4}
             &  Fixed & Signum & Adam \\
            \midrule
            $4 \times 10^{-3}$ & 14.22 & 14.21 & 14.11 \\
            $2 \times 10^{-3}$ & 14.21 & 14.18 & 14.11 \\
            $1 \times 10^{-3}$ & 14.30 & 14.27 & 14.24 \\
            \bottomrule
        \end{tabular}%
\end{table}

\subsection{Effective Rank of Minibatch Gradients}
\label{appdx:effective-rank}

Combined with Fig.~\ref{fig:gradient-conditioning} in the main text, Fig.~\ref{fig:gradient-conditioning-remaining} covers all six linear weight types of the transformer block (queries, keys, values, output projection, and both MLP layers). The same pattern holds across types: \algname{} sustains a higher normalized effective rank than Muon throughout training on every weight type considered, with the gap opening early in training and persisting for the entire run.

\begin{figure}[t]
        \centering
    \begin{subfigure}[t]{0.33\textwidth}
        \includegraphics[width=\textwidth]{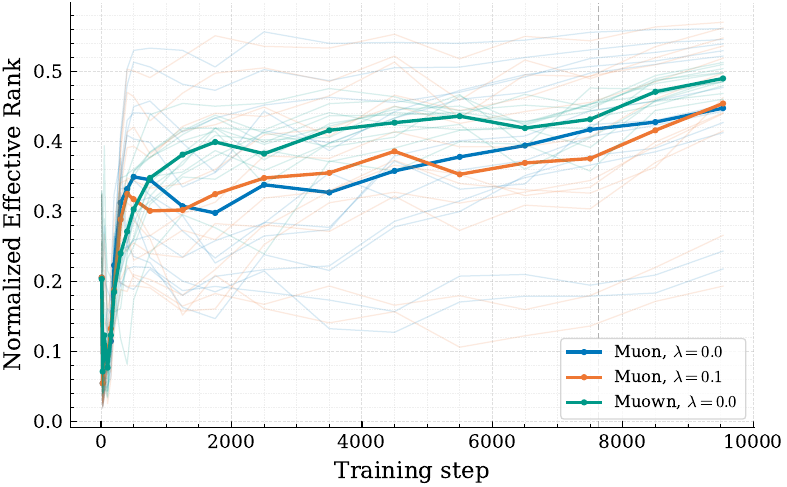}
        \caption{Key}
        \label{fig:eff-rank-w_k}
    \end{subfigure}
    \begin{subfigure}[t]{0.33\textwidth}
         \includegraphics[width=\textwidth]{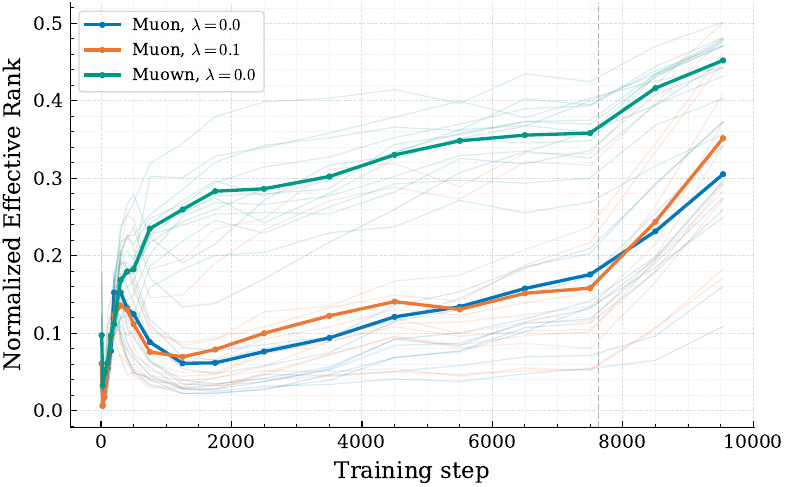}
         \caption{Value}
         \label{fig:eff-rank-w_v}
    \end{subfigure}\hfill
    \begin{subfigure}[t]{0.33\textwidth}
         \includegraphics[width=\textwidth]{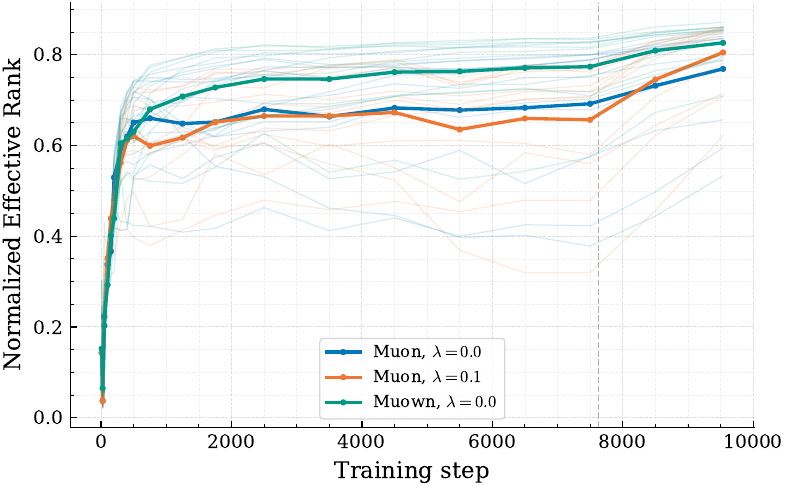}
         \caption{MLP Layer 2}
         \label{fig:eff-rank-fc2}
    \end{subfigure}\hfill

    \caption{Normalized effective rank \citep{roy_effective_2007} of minibatch gradients for a 124M model. At a given training step, we sample 8 gradients and average their effective ranks.}
    \label{fig:gradient-conditioning-remaining}
\end{figure}

\section{Proofs}
\label{appdx:proofs}

\subsection{Proposition~\texorpdfstring{\ref{prop:spectral-norm-decomposition}}{1}}
\label{ref:proof-spectral-norm-decomposition}

\begin{proof}
Since each row of $\W_t$ is nonzero, the vector $\g_t=\RowNorm{\W_t}$ is well-defined and has strictly positive entries. Hence
$$
\D_t=\mathrm{Diag}\!\left(1/\g_t\right)\W_t
$$
is well-defined, and for each row $i$,
$$
\dd_{t,i} = \frac{\ww_{t,i}}{\norm{\ww_{t,i}}_2},
$$
so $\norm{\dd_{t,i}}_2=1$. Therefore,
$$
\W_t=\mathrm{Diag}(\g_t)\D_t.
$$

Using this factorization, we compute
$$
\W_t\W_t^\top
=
\mathrm{Diag}(\g_t)\D_t\D_t^\top\mathrm{Diag}(\g_t)
=
\mathrm{Diag}(\g_t)\,\C_t\,\mathrm{Diag}(\g_t).
$$
By definition of the spectral norm,
$$
\opnorm{\W_t}^2
=
\lambda_{\max}(\W_t\W_t^\top),
$$
and therefore
$$
\opnorm{\W_t}^2
=
\lambda_{\max}\!\bigl(\mathrm{Diag}(\g_t)\,\C_t\,\mathrm{Diag}(\g_t)\bigr).
$$

It remains to factor out the maximal row magnitude. Since $\g_t=\RowNorm{\W_t}$, its entries are nonnegative, so $|\g_t|=\g_t$. By the definition of $\p_t$,
$$
\g_t=\norm{\g_t}_\infty\,\p_t,
\qquad\text{hence}\qquad
\mathrm{Diag}(\g_t)=\norm{\g_t}_\infty\,\PP_t.
$$
Substituting this into the previous identity yields
$$
\mathrm{Diag}(\g_t)\,\C_t\,\mathrm{Diag}(\g_t)
=
\norm{\g_t}_\infty^2\,\PP_t\,\C_t\,\PP_t.
$$
Since scaling a matrix by a positive scalar scales all eigenvalues by the same scalar, we obtain
$$
\lambda_{\max}\!\bigl(\mathrm{Diag}(\g_t)\,\C_t\,\mathrm{Diag}(\g_t)\bigr)
=
\norm{\g_t}_\infty^2\,\lambda_{\max}\!\bigl(\PP_t\,\C_t\,\PP_t\bigr).
$$
Recalling that the LHS of the equation above is equal to $\opnorm{\W_t}^2$ proves the claim.
\end{proof}

\subsection{Deterministic Convergence (Theorem~\texorpdfstring{\ref{thm:det_convergence}}{1})}
\label{sec:app.conv_analysis}

{
\begin{algorithm}[t]
    \caption{\algname{} (\signsgd\ Version)}\label{alg:muown_signgd}
    {\linespread{1.3}\selectfont
    \begin{algorithmic}
    \Require Initial point $\W_1 \in \mathbb{R}^{m\times n}$ with nonzero rows; stepsizes $\eta, \gamma>0$; momentum $\beta_1\in[0,1)$
    \State Initialize $\Rbf_1 \gets \W_1, \g_1 \gets \RowNorm{\W_1}, \M_0 \gets \nabla_\Rbf \mathcal L(\W_1, \xi_1)$, and $\m_0 \gets \nabla_\g \mathcal L (\W_1, \xi_1)$.
    \For{$t=1,2,\dots$}
        \State \textcolor{cyan}{\texttt{\# Update $\Rt$ with \muon}}
        \State $\Mt \gets \beta_1 \Mtm + \nabla_\Rbf \mathcal L(\Wt, \xi_t)$
        \State $\Ot \gets \arg\min_{\opnorm{\Obf} \leq 1} \langle \Mt, \Obf \rangle$\Comment{Choose arbitrary for set-valued $\arg\min$.}
        \State $\Rtp \gets \Rt + \eta \Ot$
        \State
        \State \textcolor{cyan}{\texttt{\# Update $\gt$ with \signsgd}}
        \State $\mt \gets \beta_1 \mtm + \nabla_\g \mathcal L\pare{\Wt, \xi_t}$
        \State $\gtp \gets \gt - \gamma \operatorname{sgn}\pare{\mt}$
        \State
        \State \textcolor{cyan}{\texttt{\# Update $\Wt$}}
        \State $\Wtp \gets \W\pare{\gtp, \Rtp}$
    \EndFor
    \end{algorithmic}
    }
\end{algorithm}

\renewcommand{\opnorm}[1]{\norm{#1}_{S_\infty}}
In this section we will provide the convergence analysis of Algorithm \ref{alg:muown_signgd}. Remember that 
\begin{equation*}
    \Ft \colon \mathbb{R}^m \times \rowPos \to \mathbb{R},\ \, \Ft(\g,\Rbf) := \mathcal L\pare{\W\pare{\g, \Rbf}} = \mathcal L\left(\Diag\left(\frac{\g}{\RowNorm{\Rbf}}\right)\, \Rbf\right).
\end{equation*}

\begin{theorem}[Deterministic Convergence of \algname{}]\label{thm:det_convergence}
    Assume $\Ft$ is $L$-smooth along the trajectory and lower bounded by $\Ft_\star$. Denote $\Delta_1 \geq \Ft(\g_1, \Rbf_1) - \Ft_\star$ and assume the deterministic setting, i.e., access to the true gradient $\nabla \mathcal L(\W)$. 
    Then, with $\beta_1 = 0$ and $\eta = \gamma = \sqrt{\frac{\Delta_1}{LT}}$, the iterates of Algorithm \ref{alg:muown_signgd} satisfy
    \begin{equation*}
        \frac 1 T \sum_{t=1}^T \norm{\nabla \Ft(\g_t,\Rbf_t)}_* \leq 4\sqrt{\frac{L\Delta_1}{T}}.
    \end{equation*}
    This matches the optimal $\mathcal O\pare{\sqrt{L\Delta_1/T}}$ non-convex rate (up to constants).
\end{theorem}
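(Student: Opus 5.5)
The approach is the standard steepest-descent argument for a smooth function, applied to the joint variable $(\g,\Rbf)$ under the product max-norm. The key observation is that one step of Algorithm~\ref{alg:muown_signgd} with $\beta_1=0$ is exactly a normalized steepest-descent step for $\Ft$ in the norm $\norm{(\g,\Rbf)}=\max\{\norm{\g}_\infty,\opnorm{\Rbf}\}$. In the deterministic setting the oracles return $\nabla_\g\Ft(\gt,\Rt)$ and $\nabla_\Rbf\Ft(\gt,\Rt)$; I would verify this by a chain-rule computation through $\W(\g,\Rbf)=\Diag(\g/\RowNorm{\Rbf})\Rbf$. The derivative of the row norm contributes the $\mathrm{Proj}_{\D_t}$ term, and the derivative in $\g$ gives $\diag(\nabla\mathcal L\,\D_t^\top)$.

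Then $\Mt=\nabla_\Rbf\Ft(\gt,\Rt)$ and $\mt=\nabla_\g\Ft(\gt,\Rt)$. The duality identities give
$\langle \nabla_\Rbf\Ft,\Ot\rangle=-\trnorm{\nabla_\Rbf\Ft}$ with $\opnorm{\Ot}\le1$, and
$\langle\nabla_\g\Ft,-\operatorname{sgn}(\nabla_\g\Ft)\rangle=-\norm{\nabla_\g\Ft}_1$ with $\norm{\operatorname{sgn}(\cdot)}_\infty\le1$.
With $\eta=\gamma$, the joint displacement is $(\gtp-\gt,\Rtp-\Rt)=\eta(-\operatorname{sgn}(\mt),\Ot)$, whose product norm is at most $\eta$.

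Next I apply $L$-smoothness along the trajectory in the form of the descent lemma in the product norm (this is the content of the split in Lemma~\ref{lem:split_algs}):
\[
\Ft(\gtp,\Rtp)\le \Ft(\gt,\Rt)+\langle\nabla\Ft(\gt,\Rt),(\gtp-\gt,\Rtp-\Rt)\rangle+\tfrac L2\norm{(\gtp-\gt,\Rtp-\Rt)}^2.
\]
Since the inner product is the sum of the $\g$- and $\Rbf$-parts, the linear term equals $-\eta(\norm{\nabla_\g\Ft}_1+\trnorm{\nabla_\Rbf\Ft})=-\eta\norm{\nabla\Ft(\gt,\Rt)}_*$, with no cross terms. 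The quadratic term is at most $L\eta^2/2$. I should check whether the paper's smoothness convention yields $L\eta^2/2$ or $L\eta^2$ once the two blocks are split. Even the cruder bound $L\eta^2$ suffices for the constant $4$.

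Telescoping over $t=1,\dots,T$ and using $\Ft(\g_{T+1},\Rbf_{T+1})\ge\Ft_\star$ gives
\[
\frac1T\sum_{t=1}^T\norm{\nabla\Ft(\gt,\Rt)}_*\le\frac{\Delta_1}{\eta T}+cL\eta,\qquad c\in\{\tfrac12,1\}.
\]
Substituting $\eta=\sqrt{\Delta_1/(LT)}$ gives at most $(1+c)\sqrt{L\Delta_1/T}\le 4\sqrt{L\Delta_1/T}$.

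The main obstacle is justifying the descent lemma in this geometry. $\Ft$ is only defined on $\mathbb R^m\times\rowPos$, and smoothness is assumed only "along the trajectory", so I must make sure the segment between consecutive iterates stays in the domain, or else interpret the assumption as directly granting the quadratic upper bound between consecutive iterates. I would adopt the latter reading, which is how Lemma~\ref{lem:split_algs} is presumably stated. A secondary point is confirming that the deterministic oracle formulas coincide with the true partial gradients of $\Ft$, which follows from the chain-rule check above.
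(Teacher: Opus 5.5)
Your proof is correct and uses the same approach as the paper: the descent inequality in the product max-norm (Lemma~\ref{lem:split_algs}), the duality identities $\langle \nabla_{\g}\Ft,-\operatorname{sgn}(\nabla_{\g}\Ft)\rangle=-\norm{\nabla_{\g}\Ft}_1$ and $\langle\nabla_{\Rbf}\Ft,\Ot\rangle=-\trnorm{\nabla_{\Rbf}\Ft}$, and telescoping against $\Ft_\star$. The only difference is bookkeeping: the paper bounds the max by the sum and keeps $\gamma,\eta$ separate, then drops one block at a time and adds the bounds $R_T/(\gamma T)+R_T/(\eta T)$ with $R_T=2\Delta_1$, which is where the constant $4$ comes from; your direct use of $\gamma=\eta$ gives $\eta\norm{\nabla\Ft}_*$ in one step and the sharper constant $\tfrac32$ (or $2$ with the summed quadratic term).
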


We will derive this result in steps. Straightforward calculations show that the gradient of $\Ft$ with respect to the canonical inner product is given by
\begin{align*}
    \nabla_\g \Ft\pare{\g, \Rbf} &= \mathrm{diag}\pare{\nabla \mathcal L\pare{\W} \D^\top},\\
    \nabla_\Rbf \Ft\pare{\g, \Rbf} &= \Diag\pare{\g/\RowNorm{\Rbf}}\pare{\nabla \mathcal L\pare{\W} - \Diag\pare{\nabla_\g \Ft\pare{\g,\Rbf}} \D}.
\end{align*}
where $\D = \Diag(1/\RowNorm{\Rbf})\Rbf$ is the row-normalization of $\Rbf$ and $\W = \Diag\pare{\g} \D$ is the original parameterization. In particular we note that, after re-parameterization, Muown updates $\Rbf$ and $\g$ with standard, albeit different, first-order updates. In particular Algorithm \ref{alg:muown_signgd} corresponds exactly to Muown when replacing Adam with its proxy \signsgd. In a first step we show that, under a smoothness assumption on $\Ft$, the mixed update of Algorithm \ref{alg:muown_signgd} can be analyzed separately and combined without issues.

\begin{lemma}\label{lem:split_algs}
    Let $\g, \Delta \g \in \mathbb{R}^m, \Rbf\in \rowPos, \Delta \Rbf \in \mathbb{R}^{m \times n}$ and $\gamma, \eta > 0$. Furthermore define $\g^+ := \g + \gamma \Delta \g, \Rbf^+ := \Rbf + \eta \Delta \Rbf$ and assume $\Rbf^+ \in \rowPos$ and that $\Ft$ is $L$-smooth between $(\g, \Rbf)$ and $(\g^+, \Rbf^+)$. Then
    \begin{align*}
        &\ \Ft\pare{\g^+, \Rbf^+} - \Ft\pare{\g, \Rbf} \\
        \leq &\  
        \underbrace{\gamma \langle \nabla_\g \Ft\pare{\g,\Rbf}, \Delta \g\rangle + \frac{L \gamma^2}{2}\norm{\Delta \g}_\infty^2}_{\text{only depends on $\Delta \g$}}
        + \underbrace{\eta \innerF{\nabla_\Rbf \Ft\pare{\g,\Rbf}}{\Delta \Rbf} + \frac{L}{2} \eta^2\opnorm{\Delta \Rbf}^2}_{\text{only depends on $\Delta \Rbf$}}.
    \end{align*}
\end{lemma}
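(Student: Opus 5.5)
The plan is to apply the $L$-smoothness of $\Ft$ (with respect to the product max-norm $\norm{(\g,\Rbf)} = \max\{\norm{\g}_\infty, \opnorm{\Rbf}\}$ and its dual $\norm{\cdot}_*$) to the joint step $(\gamma\Delta\g, \eta\Delta\Rbf)$ and then separate the quadratic term.

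\textbf{Step 1: descent lemma along the segment.} Since $\Ft$ is $L$-smooth between $(\g,\Rbf)$ and $(\g^+,\Rbf^+)$, the standard argument applies. Write $\Ft(\g^+,\Rbf^+) - \Ft(\g,\Rbf)$ as the integral of $\langle \nabla \Ft(\text{point on segment}), (\gamma\Delta\g, \eta\Delta\Rbf)\rangle$. Add and subtract the gradient at the starting point and bound the difference via the dual-norm/Hölder inequality combined with Lipschitzness of the gradient. This gives
\begin{equation*}
\Ft(\g^+,\Rbf^+) - \Ft(\g,\Rbf) \le \gamma\langle \nabla_\g\Ft, \Delta\g\rangle + \eta\langle \nabla_\Rbf \Ft, \Delta\Rbf\rangle + \frac{L}{2}\norm{(\gamma\Delta\g,\eta\Delta\Rbf)}^2 .
\end{equation*}
The linear term splits into the $\g$- and $\Rbf$-parts because the inner product is the canonical sum inner product. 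The segment must stay in the domain $\rowPos$. I will take this to be part of the hypothesis that smoothness holds "between" the two points, i.e.\ along the segment.

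\textbf{Step 2: splitting the quadratic term.} With the max-norm,
\begin{equation*}
\norm{(\gamma\Delta\g,\eta\Delta\Rbf)}^2 = \max\{\gamma^2\norm{\Delta\g}_\infty^2, \eta^2\opnorm{\Delta\Rbf}^2\} \le \gamma^2\norm{\Delta\g}_\infty^2 + \eta^2\opnorm{\Delta\Rbf}^2,
\end{equation*}
because the maximum of two nonnegative numbers is at most their sum. Substituting this into Step 1 yields exactly the claimed bound. The only cost of decoupling the two terms is that the max is replaced by a sum, which is the lossy but harmless step.

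\textbf{Main obstacle.} The main delicacy is fixing the meaning of "$L$-smooth" consistently with the chosen geometry. I interpret it as $\norm{\nabla\Ft(x)-\nabla\Ft(y)}_* \le L\norm{x-y}$ for $x,y$ on the segment. Under this reading, the dual pairing $\langle a,b\rangle \le \norm{a}_*\norm{b}$ must be verified for the product norm: one has $\langle \g,\g'\rangle + \langle\Rbf,\Rbf'\rangle \le \norm{\g}_1\norm{\g'}_\infty + \trnorm{\Rbf}\opnorm{\Rbf'} \le (\norm{\g}_1+\trnorm{\Rbf})\max\{\norm{\g'}_\infty,\opnorm{\Rbf'}\}$, which follows from Hölder and von Neumann's trace inequality. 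Once this is in place, the integral remainder bound $\int_0^1 L s\norm{h}^2\,ds = \tfrac{L}{2}\norm{h}^2$ is routine.
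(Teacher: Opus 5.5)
Your proposal is correct and follows the paper's proof: you apply the $L$-smoothness descent inequality in the product max-norm, split the linear term using the sum inner product, and bound $\max\{\gamma^2\norm{\Delta\g}_\infty^2, \eta^2\opnorm{\Delta\Rbf}^2\}$ by the sum of the two terms. Your extra steps only make explicit what the paper leaves implicit, namely that the segment must stay in $\rowPos$ (which is not convex) and that the dual pairing holds for the product norm.
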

\begin{proof}
    By the $L$-smoothness of $\Ft$ we have
    \begin{align*}
        &\ \Ft\pare{\g^+, \Rbf^+} - \Ft\pare{\g, \Rbf} \\
        \leq &\ \innerF{\nabla \Ft\pare{\g, \Rbf}}{(\g^+, \Rbf^+) - (\g, \Rbf)} + \frac{L}{2}\norm{(\g^+, \Rbf^+) - (\g, \Rbf)}^2 \\
        = &\ \gamma \langle \nabla_\g \Ft\pare{\g,\Rbf}, \Delta \g\rangle + \eta \innerF{\nabla_\Rbf \Ft\pare{\g,\Rbf}}{\Delta \Rbf} + \frac{L}{2}\max\{\gamma^2\norm{\Delta \g}_\infty^2, \eta^2\opnorm{\Delta \Rbf}^2\} \\
        \leq &\  
        \gamma \langle \nabla_\g \Ft\pare{\g,\Rbf}, \Delta \g\rangle + \frac{L \gamma^2}{2}\norm{\Delta \g}_\infty^2
        + \eta \innerF{\nabla_\Rbf \Ft\pare{\g,\Rbf}}{\Delta \Rbf} + \frac{L}{2} \eta^2\opnorm{\Delta \Rbf}^2,
    \end{align*}
    where we used our choice of inner product and norm in the equality.
\end{proof}

Now we are ready to prove Theorem \ref{thm:det_convergence}.

\begin{proof}[Proof of Theorem \ref{thm:det_convergence}]
    Let $t \in [T]$. We apply Lemma \ref{lem:split_algs} with
    \begin{equation*}
        \Delta \g_t := -\operatorname{sgn}\pare{\nabla_\g \Ft\pare{\g_t,\Rbf_t}} \qquad  \text{and} \qquad
        \Delta \Rbf_t := \operatorname{argmin}_{\opnorm{\Obf} \leq 1}\langle \nabla_\Rbf \Ft\pare{\g_t,\Rbf_t}, \Obf \rangle.
    \end{equation*}
    to get
    \begin{align*}
        &\ \Ft\pare{\g_{t+1}, \Rbf_{t+1}} - \Ft\pare{\g_t, \Rbf_t} \\
        \leq &\  
        \underbrace{\gamma \langle \nabla_\g \Ft \pare{\gt, \Rt}, \Delta \g_t\rangle + \frac{L \gamma^2}{2}\norm{\Delta \g_t}_\infty^2}_{\text{only depends on $\Delta \g_t$}}
        + \underbrace{\eta \innerF{\nabla_\Rbf \Ft\pare{\gt,\Rt}}{\Delta \Rbf_t} + \frac{L}{2} \eta^2\opnorm{\Delta \Rbf_t}^2}_{\text{only depends on $\Delta \Rbf_t$}}
    \end{align*}
    For the $\Delta \g_t$-term note that $\langle \nabla_\g \Ft\pare{\gt,\Rt}, \Delta \g_t\rangle = -\norm{\nabla_\g \Ft\pare{\gt,\Rt}}_1$ and $\norm{\Delta \g_t}_\infty \leq 1$, thus
    \begin{equation*}
        \gamma \langle \nabla_\g \Ft\pare{\gt,\Rt}, \Delta \g_t\rangle + \frac{L \gamma^2}{2}\norm{\Delta \g_t}_\infty^2
        \leq
        -\gamma \norm{\nabla_\g \Ft\pare{\gt,\Rt}}_1 + \frac{L \gamma^2}{2}.
    \end{equation*}
    For the $\Delta \Rt$-term we have $\innerF{\nabla_\Rbf \Ft\pare{\gt,\Rt}}{\Delta \Rt} = -\trnorm{\nabla_\Rbf \Ft\pare{\gt,\Rt}}$, where $\trnorm \cdot$ is the Schatten-1 (trace) norm and $\opnorm{\Delta \Rt} \leq 1$, thus
    \begin{equation*}
        \eta \innerF{\nabla_\Rbf \Ft\pare{\gt,\Rt}}{\Delta \Rt} + \frac{L}{2} \eta^2\opnorm{\Delta \Rt}^2
        \leq
        -\eta \trnorm{\nabla_\Rbf \Ft\pare{\gt,\Rt}} + \frac{L}{2} \eta^2.
    \end{equation*}
    Combining these bounds gives
    \begin{equation*}
        \Ft\pare{\g_{t+1}, \Rbf_{t+1}} - \Ft\pare{\g_t, \Rbf_t} 
        \leq
        -\gamma \norm{\nabla_\g \Ft\pare{\gt,\Rt}}_1 -\eta \trnorm{\nabla_\Rbf \Ft\pare{\gt,\Rt}} + \frac{L}{2}(\eta^2 + \gamma^2).
    \end{equation*}
    Summing this inequality from $t=1$ to $T$ and using $\Ft\pare{\g_{T+1}, \Rbf_{T+1}} \geq \Ft_\star$ gives
    \begin{equation*}
        \sum_{t=1}^{T}\pare{\gamma \norm{\nabla_\g \Ft\pare{\gt,\Rt}}_1 + \eta \trnorm{\nabla_\Rbf \Ft\pare{\gt,\Rt}}}
        \leq \Delta_1 + \frac{L}{2}(\eta^2 + \gamma^2)T.
    \end{equation*}
    Define $R_T := \Delta_1 + \frac{L}{2}(\eta^2 + \gamma^2)T$. Since both terms in the sum are nonnegative, we can drop one term at a time to obtain
    \begin{equation*}
        \frac{1}{T}\sum_{t=1}^T \norm{\nabla_\g \Ft\pare{\gt,\Rt}}_1 \leq \frac{R_T}{\gamma T},
        \qquad
        \frac{1}{T}\sum_{t=1}^T \trnorm{\nabla_\Rbf \Ft\pare{\gt,\Rt}} \leq \frac{R_T}{\eta T}.
    \end{equation*}
    By our choice of product norm $\norm{(\g,\Rbf)} = \max\{\norm{\g}_\infty,\opnorm{\Rbf}\}$, the corresponding dual norm satisfies
    \begin{equation*}
        \norm{\nabla \Ft\pare{\gt,\Rt}}_* = \norm{\nabla_\g \Ft\pare{\gt,\Rt}}_1 + \trnorm{\nabla_\Rbf \Ft\pare{\gt,\Rt}}.
    \end{equation*}
    Combining the two bounds yields
    \begin{equation*}
        \frac{1}{T}\sum_{t=1}^T \norm{\nabla \Ft\pare{\gt,\Rt}}_* \leq \frac{R_T}{\gamma T} + \frac{R_T}{\eta T}.
    \end{equation*}
    
\end{proof}

\subsection{Stochastic Convergence (Theorem~\texorpdfstring{\ref{thm:stoch_convergence}}{2})}
\label{sec:app.conv_analysis.stochastic}

For the stochastic convergence result, we will use the following standard bounded variance assumption on the stochastic gradients along the trajectory of Algorithm \ref{alg:muown_signgd}.
\begin{assumption}\label{assum:bounded_var}
    The stochastic gradients along the trajectory are unbiased, i.e., 
    \begin{equation*}
        \E[\nabla_{\g} \Ft(\gt, \Rt, \xi_t)] = \nabla_{\g} \Ft(\gt, \Rt), \qquad 
        \E[\nabla_{\Rbf} \Ft(\gt, \Rt, \xi_t)] = \nabla_{\Rbf} \Ft(\gt, \Rt),
    \end{equation*}
    and have bounded variance
    \begin{equation*}
        \E \left[\norm{\nabla_{\g} \Ft(\gt, \Rt, \xi_t) - \nabla_{\g} \Ft(\gt, \Rt)}_1^2\right] \leq \sigg^2,  \ 
        \E \left[\trnorm{\nabla_{\Rbf} \Ft(\gt, \Rt, \xi_t) - \nabla_{\Rbf} \Ft(\gt, \Rt)}^2\right] \leq \sigR^2.
    \end{equation*}
\end{assumption}

Now we are ready to prove the Theorem.

\begin{proof}

As in the deterministic case, we apply Lemma \ref{lem:split_algs} to get
\begin{align}\begin{split}\label{eq:descent_ineq}
    &\ \Ft\pare{\gtp, \Rtp} - \Ft\pare{\gt, \Rt} \\
    \leq &\  
    \gamma \langle \nabla_\g \Ft\pare{\gt,\Rt}, - \sgn\pare{\mt}\rangle
    + \eta \innerF{\nabla_\Rbf \Ft\pare{\gt,\Rt}}{\Ot} + \frac{L}{2}(\gamma^2 + \eta^2).
\end{split}\end{align}
We start by bounding the first term on the right-hand side through the classical decomposition of the inner product \citep[see e.g.][]{MomentumImprovesNormalized2020cutkosky} to get
\begin{align*}
    &\ \left\langle \nabla_\g \Ft\pare{\gt,\Rt}, - \sgn\pare{\mt}\right\rangle\\
    =&\  \left\langle \nabla_\g \Ft\pare{\gt,\Rt} - (1-\beta_1)\mt, - \sgn\pare{\mt}\right\rangle 
    + \left\langle (1-\beta_1)\mt, - \sgn\pare{\mt}\right\rangle\\
    \leq &\ \norm{\nabla_\g \Ft\pare{\gt,\Rt} - (1-\beta_1)\mt}_1 \norm{-\sgn\pare{\mt}}_\infty - (1-\beta_1)\norm{\mt}_1 \\
    \leq &\ \norm{\nabla_\g \Ft\pare{\gt,\Rt} - (1-\beta_1)\mt}_1 - (1-\beta_1)\norm{\mt}_1\\
    \leq &\ 2\norm{\nabla_\g \Ft\pare{\gt,\Rt} - (1-\beta_1)\mt}_1 - \norm{\gradgt}_1,
\end{align*}
where we applied the Hölder inequality in the second, and the triangle inequality in the fourth step. Further, when denoting $\mubf_t := \gradgt - (1-\beta_1)\mt, \eps_t := \nabla_\g \Ft(\gt, \Rt, \xi_t) - \gradgt$, 
\begin{equation*}
    \mubf_t = 
    \beta_1^{t-1} \mubf_1 
    - (1-\beta_1) \sum_{\tau=1}^{t-1} \beta_1^{\tau - 1} \eps_{t- \tau + 1}
    + \sum_{\tau=1}^{t-1} \beta_1^{\tau} \pare{\nabla_\g \Ft(\g_{t - \tau}, \Rbf_{t-\tau}) - \nabla_\g \Ft(\g_{t + 1 - \tau}, \Rbf_{t + 1 - \tau})}
\end{equation*}
and hence
\begin{align}\label{eq:mu_decomposition}\begin{split}
    \norm{\mubf_t}_1 \leq
    &\ \beta_1^{t-1} \norm{\mubf_1}_1 
    + (1-\beta_1) \norm{\sum_{\tau=1}^{t-1} \beta_1^{\tau - 1} \eps_{t- \tau + 1}}_1\\
    &+ \sum_{\tau=1}^{t-1} \beta_1^{\tau} \norm{\nabla_\g \Ft(\g_{t - \tau}, \Rbf_{t-\tau}) - \nabla_\g \Ft(\g_{t + 1 - \tau}, \Rbf_{t + 1 - \tau})}_1\\
    \leq &\ \beta_1^{t-1} \norm{\mubf_1}_1 
    + (1-\beta_1) \norm{\sum_{\tau=1}^{t-1} \beta_1^{\tau - 1} \eps_{t- \tau + 1}}_1 
    + \gamma L \sum_{\tau=1}^{t-1} \beta_1^{\tau} \\
    \leq &\ \beta_1^{t-1} \norm{\mubf_1}_1 
    + (1-\beta_1) \norm{\sum_{\tau=1}^{t-1} \beta_1^{\tau - 1} \eps_{t- \tau + 1}}_1 
    + \frac{\gamma L}{1-\beta_1}
\end{split}\end{align}
Following existing analyses of Muon \citep{ConvergenceAnalysisMuon2025shen}, we next use equivalence of norms in finite dimensional spaces to derive
\begin{align*}
    \E\left[\norm{\sum_{\tau=1}^{t-1} \beta_1^{\tau - 1} \eps_{t- \tau + 1}}_1\right]
    &\leq \zeta_\g \E\left[\norm{\sum_{\tau=1}^{t-1} \beta_1^{\tau - 1} \eps_{t- \tau + 1}}_2\right]
    \leq \zeta_\g \sqrt{\E\left[\norm{\sum_{\tau=1}^{t-1} \beta_1^{\tau - 1} \eps_{t- \tau + 1}}_2^2\right]}\\
    &= \zeta_\g \sqrt{\sum_{\tau=1}^{t-1} \beta_1^{2(\tau-1)} \E\left[\norm{\eps_{t- \tau + 1}}_2^2\right]} 
    \leq \frac{\zeta_\g \sigg}{\sqrt{1-\beta_1^2}},
\end{align*}
where we used the $L^2$-orthogonality of the error terms in the second, and $\norm \cdot_2 \leq \norm \cdot_1$ in the last step. Plugging into \eqref{eq:mu_decomposition} and summing yields
\begin{align*}
    \sum_{t=1}^T \E\left[\norm{\mubf_t}_1\right] \leq
    &\ \sum_{t=1}^T \beta_1^{t-1} \E [\norm{\mubf_1}_1] 
    + (1-\beta_1)\sum_{t=1}^T \E\left[\norm{\sum_{\tau=1}^{t-1} \beta_1^{\tau - 1} \eps_{t- \tau + 1}}_1\right] 
    + \frac{\gamma L}{1-\beta_1}T\\
    \leq &\ 
    \frac{1}{1-\beta_1} \E [\norm{\mubf_1}_1] 
    + \frac{\zeta_\g (1-\beta_1) \sigma}{\sqrt{1-\beta_1^2}}T
    + \frac{\gamma L}{1-\beta_1}T\\
    \leq &\ 
    \frac\sigg{1-\beta_1}
    + \zeta_\g \sigg \sqrt{1-\beta_1} T
    + \frac{\gamma L}{1-\beta_1}T,
\end{align*}
where we used our choice of $\m_1$ in the last step. Applying the same arguments to the $\Rbf$-term in \eqref{eq:descent_ineq} yields
\begin{align*}
    \sum_{t=1}^T \left\langle \nabla_\Rbf \Ft\pare{\gt,\Rt}, \Ot\right\rangle
    &\leq 
    \frac {2\sigR} {1-\beta_1}  
    + 2\zeta_\Rbf \sigR \sqrt{1-\beta_1} T
    + \frac{2\eta L}{1-\beta_1}T
    - \sum_{t=1}^T \trnorm{\gradRt}.
\end{align*}
Summing \eqref{eq:descent_ineq} from $t=1$ to $T$ and using $\Ft\pare{\g_{T+1}, \Rbf_{T+1}} \geq \Ft_\star$ gives
\begin{align*}
    &\ \sum_{t=1}^T \E\left[\gamma \norm{\gradgt}_1 + \eta \trnorm{\gradRt}\right] \\
    \leq &\ 
    \Delta_1
    + L \pare{\frac{5\gamma^2}{2(1-\beta_1)} + \frac{5\eta^2}{2(1-\beta_1)}}T
    + 2 (\zeta_\g \sigg + \zeta_\Rbf \sigR)\eta \sqrt{1-\beta_1}  T
    + \frac{2(\sigg + \sigR)\eta}{1-\beta_1}.
\end{align*}
By our choices of $\gamma = \eta$ and definition $\hat \sigma = \zeta_\g \sigg + \zeta_\Rbf \sigR$, we can simplify
\begin{equation}
    \frac{1}{T}\sum_{t=1}^T \E\left[\norm{\nabla \Ft\pare{\gt,\Rt}}_*\right]
    \leq 
    \frac{\Delta_1}{\eta T}
    + \frac{5 \eta L}{(1- \beta_1)}
    + 2 \sqrt{1-\beta_1}\hat \sigma 
    + 2\frac{\sigg + \sigR}{(1-\beta_1) T},
\end{equation}
where we used the fact that $\norm{\nabla \Ft \pare{\gt, \Rt}}_* = \norm{\gradgt}_1 + \trnorm{\gradRt}$. Plugging our choice $\gamma = \eta = \sqrt{\frac{\Delta_1 (1-\beta_1)}{L T}}$ in the above inequality yields
\begin{equation*}
    \frac{1}{T}\sum_{t=1}^T \E\left[\norm{\nabla \Ft\pare{\gt,\Rt}}_*\right]
    \leq 
    6\sqrt{\frac{\Delta_1 L}{(1-\beta_1) T}}
    + 2 \sqrt{1-\beta_1} \hat \sigma
    + 2\frac{\sigg + \sigR}{(1-\beta_1) T}.
\end{equation*}
Further using the choice $\beta_1 = 1 - \min\left\{1, \max\left\{T^{-\nicefrac 2 3}, \sqrt{\frac{\Delta_1 L}{\hat \sigma^2 T}}\right\}\right\}$ gives
\begin{align*}
    \sqrt{\frac{\Delta_1 L}{(1-\beta_1) T}}
    &\leq \sqrt{\frac{\Delta_1 L}{T}} + \pare{\frac{\Delta_1 L \hat \sigma^2}{T}}^{\nicefrac 1 4}\\
    \sqrt{1-\beta_1} \hat \sigma
    &\leq \frac{\hat \sigma}{T^{\nicefrac 1 3}} + \pare{\frac{\Delta_1 L \hat \sigma^2}{T}}^{\nicefrac 1 4}\\
    \frac{ \sigg + \sigR}{(1-\beta_1) T}
    &\leq \frac{\sigg + \sigR}{T^{\nicefrac 1 3}}
    \leq \frac{\hat \sigma}{T^{\nicefrac 1 3}}
\end{align*}
Combining these bounds yields 
\begin{equation*}
    \frac{1}{T}\sum_{t=1}^T \E\left[\norm{\nabla \Ft\pare{\gt,\Rt}}_*\right]
    \leq 
    6 \sqrt{\frac{\Delta_1 L}{T}}
    + 8 \pare{\frac{\Delta_1 L \hat \sigma^2}{T}}^{\nicefrac 1 4}
    + \frac{4 \hat \sigma}{T^{\nicefrac 1 3}}
\end{equation*}
and hence the result.
    
\end{proof}
}

\newpage
\section{Experimental Details}
\label{appdx:experimental-details}

\textbf{Further Remarks on Algorithm~\ref{alg:algname}.\ } We use learning rate scaling to match the Adam update RMS norm \citep{liu_muon_2025} for Muon as well as \algname{}. This corresponds to a learning rate adjustment by a factor of $0.2 \sqrt{\max(m, n)}$ for a layer of shape $m \times n$. Moreover, \algname{} as outlined in Algorithm~\ref{alg:algname} uses simplified Nesterov momentum \citep{MethodSolvingConvex1983nesterov, bengio_2013_advances}, matching the PyTorch implementation of Muon. 

To keep Algorithm~\ref{alg:algname} concise, we describe the use of decoupled weight decay with \algname{} separately here. When decoupled weight decay with coefficient $\lambda$ is enabled, we replace the last line of Algorithm~\ref{alg:algname} with $\W \gets \mathrm{Diag}(\frac{\g}{\rbf}) \Rbf- \eta_t\lambda \W$ and refresh $\g \gets \RowNorm{\W}$ to preserve the invariant $\g = \RowNorm{\W}$. This is only one of several natural placements of weight decay under the $(\g, \Rbf)$-parameterization. For instance, one could equally well decay only the direction component $\Rbf$. The variant adopted above stays closest to the canonical decoupled formulation \citep{loshchilov_decoupled_2019} on the effective weight $\W$ and is the one used throughout our experiments. A systematic study of other alternatives is left to future work.
 
\textbf{General Setup.\ } If not noted otherwise, we run all experiments on top of the implementation of \cite{ajroldi2024plainlm}. We make our code available on github.\footnote{\url{https://github.com/kcc-lion/muown}} We train a transformer with Rotational Positional Embeddings \citep{su_rope}, RMSNorm \citep{zhang_rmsnorm}, SwiGLU MLPs \citep{shazeer_glu_2020}, expansion ratio of 8/3, and weight tying. We employ a warmup-stable-decay schedule for 2\% of training as warmup and 20\% as cooldown. In all but the distributed benchmarking experiment in Table~\ref{tab:timing-overhead}, we make use of the PyTorch implementation of Muon. We justify this deviation below. We employ batch size of 524k tokens per step training, training for a total token budget of 20$\times$ tokens/parameter \citep{hoffmann_training_2022}, if not otherwise noted. Regarding sequence length and training budget we make the following size-dependent choices:
\begin{itemize}
    \item For models of the 124M class, we choose 12 layers with hidden dimension of 768 and 12 attention heads. We train with sequence length 1024 for 5B tokens and validate on 4M tokens. We remark that due to weight tying the actual number of parameters is 124M.
    \item For models with 500M parameters, we choose 22 layers with hidden dimension of 1280 and 20 attention heads. We train with sequence length 2048 for 10B tokens and validate on 10M tokens. 
    \item For models with 1B parameters, we choose 24 layers with hidden dimension of 1792 and 28 heads. We train for 20B tokens with sequence length 2048, validating on 10M tokens. 
    \item For models with 2.7B parameters, we choose 32 layers with dimension 2560 and 40 heads and train for 53B tokens, keeping everything else as in the 1B setup. 
\end{itemize}

All models are trained on the 100B token sample of FineWeb-Edu \citep{lozhkov2024fineweb-edu} and tokenized with the GPT-NeoX-20B tokenizer \citep{black_gpt-neox-20b_2022}, if not noted otherwise. Experiments are performed on either of NVIDIA GH200 and NVIDIA H100 GPUs and we estimate the total GPU hours spent as roughly $\approx$18k hours, including experimentation, development, and final runs.

\textbf{Spectral Analysis (Fig.~\ref{fig:sv_histograms}, \ref{fig:spectral-norm-detailed}, \ref{fig:row-norm-detailed}, \ref{fig:lambda-max-detailed}, \ref{fig:spectral-histogram-detailed}, \ref{fig:spectral-distribution-lineplot}). } For the spectral analysis plots, we adopt the 500M setup outlined above and tune Muon and \algname{} with learning rate $\eta \in \{1 \times 10^{-3}, 2 \times 10^{-3}, 4 \times 10^{-3}\}$ and weight decay $\lambda \in \{ 0, 0.1 \}$. We report results for the best runs in terms of validation loss, i.e. $\eta = 2 \times 10^{-3}$ with and without weight decay for Muon and $\eta = 4 \times 10^{-3}$ without weight decay for \algname{}. For the setup with fixed row magnitude, we re-use the best config from \algname{}.

\textbf{Learning Rate Ablation (Fig.~\ref{fig:lr-ablation}).} For the ablation study in Fig.~\ref{fig:lr-ablation}, we use a grid of $\eta \in \{2.5 \times 10^{-4}, 5 \times 10^{-4}, 7.5 \times 10^{-4}, 10^{-3}, 2 \times 10^{-3}, 4 \times 10^{-3}, 6 \times 10^{-3}, 8 \times 10^{-3}\}$ for all optimizers apart from Lion. For Muon's weight decay parameter, we use $\lambda=0.1$, which we found to be optimal across a grid of $\lambda \in \{0.01, 0.03, 0.1, 0.3\}$ (see Fig.~\ref{fig:weight-decay-ablation}). Similarly, based on the results of Fig.~\ref{fig:weight-decay-ablation}, we choose $\lambda=0.01$ for \algname{} when using weight decay, despite having only a marginal effect. For Lion, we use $\eta \in \{ 8 \times 10^{-5}, 10^{-4}, 2.5 \times 10^{-4}, 5 \times 10^{-4}, 7.5 \times 10^{-4}\}$, as Lion typically has a larger update norm than other optimizers, requiring a lower learning rate \citep{chen_symbolic_2023}. This matches our observation of divergence for learning rates larger than these. Moreover, \cite{chen_symbolic_2023} prescribe a larger value for weight decay in the range of 3-10$\times$ of the one used for AdamW. We opt for $5\times$ the optimal value of Muon, yielding $\lambda=0.5$ for Lion. We retain the default values of $(\beta_1, \beta_2) = (0.9, 0.99)$ for Lion. For SOAP \citep{vyas_soap_2024}, we use the default hyperparameters from the original implementation: $(\beta_1, \beta_2) = (0.9, 0.95)$, $\epsilon = 10^{-8}$, a preconditioner update frequency of 10 steps, and bias correction enabled. We set the maximum precondition dimension to 10{,}000, which excludes the embedding and head layers (vocabulary size 50{,}280) from preconditioning while covering all other layers. Weight decay for SOAP is set at $\lambda=10^{-4}$, following \cite{vyas_soap_2024}.

\textbf{2.7B Runs (Fig.~\ref{fig:3B-plot}, Table~\ref{tab:1B-results}).\ } We adopt the 2.7B setup above and sweep $\eta \in \{7.5\times 10^{-4}, 10^{-3}, 2\times 10^{-3}\}$, with $\lambda=0$ for \algname{} and $\lambda \in \{0, 0.1\}$ for Muon. The $\eta=2\times 10^{-3}$ Muon runs were aborted after 12h on 16 GPUs as the validation loss was diverging.

\textbf{Weight Decay Ablation (Fig.~\ref{fig:weight-decay-ablation}).} We adopt the 124M setup described above and train for 5B tokens ($2\times$ Chinchilla-optimal), sweeping a two-dimensional grid over the learning rate $\eta \in \{7.5 \times 10^{-4}, 10^{-3}, 2 \times 10^{-3}, 4 \times 10^{-3}, 6 \times 10^{-3}\}$ and weight decay $\lambda \in \{0, 0.01, 0.03, 0.1, 0.3\}$ for both Muon and \algname{}. We report perplexity at $2\times$ the Chinchilla-optimal token count (i.e. $5$B tokens).

\textbf{Learning Rate and Width Ablation (Fig.~\ref{fig:width-ablation}).} We use the 124M architecture outlined above and vary the width as 512, 768, 1280, resulting in models of size 67M, 124M, and 308M. We train each size for a Chinchilla-optimal amount of tokens and use a logarithmically-spaced learning rate grid from $2^{-16}$ to $2^{-5}$. 

\textbf{Gradient Conditioning (Fig.\ \ref{fig:gradient-conditioning}).} The effective rank of a matrix is defined as $\mathrm{erank}(\boldsymbol{\sigma}) = \exp (-\sum_{i=1}^{\min(m, n)} p_i \log p_i)$ where $p_i = \frac{\boldsymbol{\sigma}_i}{\| \boldsymbol{\sigma}\|_1}$ and $\boldsymbol{\sigma} \in \mathbb{R}^{\min(m, n)}$ contains the singular values of the gradient matrix. We normalize the effective rank by dividing it with $\min(m, n)$. For Muon, we show the gradient with respect to $\W$, and for \algname{} the gradient with respect to $\Rbf$. Each line shows one of the 12 layers and the bold line the average. 

\textbf{Gradient Noise (Fig.\ \ref{fig:scaled_cm_comparison}).} We base the gradient noise analysis on our 124M weight decay ablation setup and consider the configuration with the best final validation loss. This results in Muon ($\eta = 4 \times 10^{-3}, \lambda = 0.1$) and \algname{} without weight decay ($\eta = 6 \times 10^{-3}, \lambda = 0$) run. For each of these runs we save the model and optimizer states after $t = 1900, 3800, 5700, 7600$ and the final $T = 9538$ iterations. For every checkpoint, we first compute the true gradient $\nabla_\W \mathcal L \pare{\Wt}$ for Muon, and $\nabla_\g \Ft\pare{\gt, \Rt}, \nabla_\Rbf \Ft \pare{\gt, \Rt}$ for \algname{} checkpoints over the whole $5$B token training dataset $\{\xi_1, \dots, \xi_T\}$. In a second pass we then calculate, for each weight matrix $\W_t^{(i)}$, resp. $\g_t^{(i)}, \Rt^{(i)}$,
\begin{equation*}
    \sigma_{\W_t^{(i)}}^2 = \frac 1 T \sum_{\tau = 1}^{T}
    \trnorm{\nabla_\W \mathcal L \pare{\Wt} - \nabla_\W \mathcal L \pare{\Wt, \xi_\tau}}^2,
\end{equation*}
and similarly $\sigma_{\Rt^{(i)}}$ and $\sigma_{\gt^{(i)}}$. The values of $\zeta_\W, \zeta_\g$ and $\zeta_\Rbf$ are known analytically to be $\zeta_\W = \zeta_\Rbf = \sqrt{\min\{m, n\}}$ for $\W, \Rbf \in \mathbb R^{m \times n}$ and $\zeta_\g = \sqrt{m}$ for $\g \in \mathbb R^m$. We compute the noise term separately for each weight matrix, plot the median across weights as a bold line, and show the interquartile range as a shaded region. We find that the noise term of \algname{} remains consistently below that of Muon, with the gap widening over the course of training.

\textbf{nanoGPT Speedrunning (Fig.~\ref{fig:modded-nanogpt-learning-rate}). } For the nanoGPT speedrunning \citep{modded_nanogpt_2024} experiments, we leverage the Muon record available on github.\footnote{\url{https://github.com/KellerJordan/modded-nanogpt/blob/2c7027462732f2f7a76fb4b021066442c7b29298/records/track_1_short/2024-10-10_Muon/train_gpt2.py
}} 

\textbf{Multi-Seed Run for 500M (Table~\ref{tab:500M-multiseed}).\ } We adopt the 500M setup above and report mean and standard deviation across three seeds for each configuration. The grid is $\eta \in \{1, 2, 4\}\times 10^{-3}$ for all optimizers. \algname{} is run with $\lambda=0$; for Muon we report both $\lambda=0$ and $\lambda=0.1$, the latter identified as optimal in the 124M weight-decay ablation (Fig.~\ref{fig:weight-decay-ablation}). SOAP follows its hyperparameters from the 124M learning-rate ablation above with $\lambda=10^{-4}$.

\textbf{Resource Overhead (Table~\ref{tab:timing-overhead}).} We compare the overall step times for Muon and \algname{} by running 1K steps for the 500M model outlined above on 4 GH200 devices. Here, we report the median runtime for Muon not based on the PyTorch implementation, but based on a custom version that mirrors the implementation details of \algname{} as closely as possible. The reason for this deviation is that the PyTorch implementation does not provide a version which distributes the optimizer step across ranks, making a fair comparison in this setup infeasible.

\textbf{Qwen2-0.5B (Table~\ref{tab:qwen2-0.5B}).\ } We use the model configuration available on HuggingFace\footnote{\url{https://huggingface.co/Qwen/Qwen2-0.5B/blob/main/config.json}}, but lower the base frequency of RoPE to 10,000 to align the configuration with the initial phase of pre-training as outlined in \cite[Section~3.2]{yang_qwen2_2024}. We use the 10B split of FineWeb-Edu tokenized with the Qwen2-Tokenizer. We use a grid of $\eta \in \{10^{-3}, 2 \times 10^{-3}, 4 \times 10^{-3}, 8 \times 10^{-3}, 10^{-2}\}$ and tune $\lambda \in \{0, 0.1\}$ for Muon. 

\textbf{Magnitude Optimizer Ablation (Table~\ref{tab:magnitude-optimizer}).\ } We adopt the 500M setup and compare three choices for the magnitude optimizer in Algorithm~\ref{alg:algname} across $\eta \in \{1, 2, 4\}\times 10^{-3}$, all run with $\lambda=0$: \emph{Fixed} freezes $\g_t = \g_1$ at initialization, exposing the reparameterization without optimizing it; \emph{Signum} replaces Adam with signSGD with momentum, and \emph{Adam} is the default of Algorithm~\ref{alg:algname}.

\textbf{Licenses.} The FineWeb-Edu dataset \citep{lozhkov2024fineweb-edu} is released under the Open Data Commons Attribution License (ODC-By). \texttt{plainLM} \citep{ajroldi2024plainlm} and \texttt{modded-nanoGPT} \citep{modded_nanogpt_2024} are released under the MIT License. Qwen2 \citep{yang_qwen2_2024} is released under the Apache 2.0 License.

\end{document}